\documentclass[a4paper]{article}
\pdfpagewidth=8.5in
\pdfpageheight=11in
\hyphenation{Ma-ron}
\usepackage{times,xspace}

\newcommand{\mathbold}[1]{\mathbf{#1}}

\usepackage{amsmath,amssymb,amsthm}
\usepackage[round]{natbib}
\usepackage{fullpage}
\usepackage[usenames,dvipsnames]{xcolor}
\usepackage[bookmarksdepth=3]{hyperref}
\hypersetup{%
    colorlinks=true,
    linkcolor=darkgray,
    citecolor=black,
   urlcolor  = black}

\newtheorem{theorem}{Theorem}[section]
\newtheorem*{theorem*}{Theorem}

\newtheorem{proposition}{Proposition}[section]
\newtheorem{corollary}[theorem]{Corollary}

\theoremstyle{definition}

\newtheorem{remark}[theorem]{Remark}

\newcommand{\ldbl}{\{\!\!\{}
\newcommand{\rdbl}{\}\!\!\}}
\newcommand{\bldbl}{\left\{\!\!\left\{}
\newcommand{\brdbl}{\right\}\!\!\right\}}
\newcommand{\Rb}{\mathbb{R}}
\newcommand{\Fb}{\mathbb{F}}

\newcommand{\Nb}{\mathbb{N}}

\newcommand{\Xb}{\mathbb{X}}
\newcommand{\set}[1] {\left\lbrace #1 \right\rbrace }

\newcommand{\walkrefinementb}[2]{#1_{\mathsf{W}[#2]}}
\newcommand{\wlrefinementname}{\mathsf{2\text{-}WL}}
\newcommand{\wlrefinement}[1]{#1_{\wlrefinementname}}

\newcommand{\wl}[1]{\mathsf{#1\text{-}WL}}
\newcommand{\walk}[1]{\mathsf{W[#1]}}

\newcommand{\MPNN}{MPNN\xspace}
\newcommand{\MPNNs}{MPNNs\xspace}
\newcommand{\GNN}{GNN\xspace}
\newcommand{\GNNs}{GNNs\xspace}

\newcommand{\GINs}{GINs\xspace}
\newcommand{\MLP}{MLP\xspace}
\newcommand{\MLPs}{MLPs\xspace}

\title{Walk Message Passing Neural Networks and Second-Order Graph Neural Networks}

\author{
  Floris Geerts\\
  University of Antwerp\\
  \texttt{floris.geerts@uantwerp.be}
}

\date{}
\begin{document}

\maketitle

\begin{abstract}
	The expressive power of message passing neural networks (\MPNNs) is known to match the expressive power of the 1-dimensional Weisfeiler-Leman graph ($\wl{1}$) isomorphism test. To boost the expressive power of \MPNNs, a number of graph neural network architectures have recently been proposed based on higher-dimensional Weisfeiler-Leman tests. In this paper we consider the two-dimensional ($\wl{2}$) test and introduce a new type of \MPNNs, referred to as  $\ell$-walk \MPNNs, which aggregate features along walks of length $\ell$ between vertices. We show that $2$-walk MPNNs match $\wl{2}$ in expressive power. More generally, $\ell$-walk \MPNNs, for any $\ell\geq 2$, are shown to match the expressive power of the recently introduced $\ell$-walk refinement procedure ($\walk{\ell}$). Based on a correspondence between $\wl{2}$ and $\walk{\ell}$, we observe that $\ell$-walk \MPNNs and $2$-walk \MPNNs have the same expressive power, i.e., they can distinguish the same pairs of graphs, but $\ell$-walk \MPNNs can possibly distinguish pairs of graphs faster than $2$-walk \MPNNs.
	
	When it comes to concrete learnable graph neural network (\GNN) formalisms that match $\wl{2}$ or $\walk{\ell}$ in expressive power, we consider second-order graph neural networks that allow for non-linear layers. In particular, to match $\walk{\ell}$ in expressive power, we allow $\ell-1$ matrix multiplications in each layer. We propose different versions of second-order \GNNs depending on the type of features (i.e., coming from a countable set, or coming from an uncountable set) as this affects the number of dimensions needed to represent the features.
Our results indicate that increasing non-linearity in layers by means of allowing multiple matrix multiplications does not increase expressive power. At the very best, it results in a faster distinction of input graphs.
 \end{abstract}

\section{Introduction}
One of the most popular methods for  deep learning on graphs are the message passing neural networks (\MPNNs)  introduced by~\citet{GilmerSRVD17}. An \MPNN iteratively propagates vertex features based on the adjacency structure of a graph in a number of rounds. In each round, every vertex receives messages from its neighbouring vertices, based on the features computed in the previous round. Then, each vertex aggregates the received messages and performs an additional update based on the feature of the vertex itself. As such, new features are obtained for every vertex and the \MPNN proceeds to the next round. When the features consist of tuples in $\Rb^{n}$, an \MPNN can be regarded as a means of computing an embedding of the vertices of a graph into $\Rb^n$. An \MPNN can also include an additional read-out phase in which the embedded vertices are combined to form a single representation of the entire graph.
Important questions in this context relate to the expressive power of \MPNNs, such as: ``When can two vertices be distinguished by means of the computed embedding?'' and ``When can two graphs be distinguished?''.

In two independent works~\citep{grohewl,xhlj19} such expressivity questions were addressed by connecting \MPNNs to the one-dimensional Weisfeiler-Leman  ($\wl{1}$) graph isomorphism test. Alike \MPNNs,  $\wl{1}$ also iteratively updates
vertex features based on the graph's adjacency structure. 
\citet{grohewl} and~\citet{xhlj19} show that \MPNNs cannot distinguish more vertices by means of the computed embeddings than $\wl{1}$ does. In other words, the expressive power of \MPNNs is bounded by $\wl{1}$. 

Furthermore, \citet{grohewl} identify a simple class of \MPNNs that is as expressive as $\wl{1}$. In other words, for every graph there exists an \MPNN in that class whose distinguishing power matches that of $\wl{1}$. Similarly, by applying \MPNNs on the direct sum of two graphs, these \MPNNs can only distinguish the component graphs when $\wl{1}$ can distinguish them. In \citet{geerts2020lets}, similar results were established for an even simpler class of \MPNNs and generalised to MPNNs that that can use degree information (such as the graph convolutional networks by \citet{kipf-loose}). There is a close correspondence between $\wl{1}$ and logic. More precisely, two graphs are indistinguishable by $\wl{1}$ if and only if no sentence in the  two-variable fragment of first-order logic with counting can distinguish those graphs. A more refined analysis of \MPNNs based on this connection to logic can be found in~\citet{Barcel2020TheLE}. The impact of random features on the expressive power of \MPNNs is
considered in~\citet{sato2020random}.

\citet{xhlj19} propose another way of letting \MPNNs match the expressive power of $\wl{1}$. More specifically, they
propose so-called graph isomorphism networks (\GINs) and show that \GINs can distinguish any two graphs (in some collection
of graphs) whenever $\wl{1}$ does so. \GINs crucially rely on the use of multi layer perceptrons (\MLPs) and their universality~\citep{Cybenko92,Hornik91}. To leverage this universality, the collection of graphs should have bounded degree and all features combined should originate from a finite set.

Since $\wl{1}$ fails to distinguish even very simple graphs 
the above results imply that  \MPNNs have  limited expressive power. To overcome this limitation, higher-dimensional Weisfeiler-Leman graph isomorphism tests have recently be considered as inspiration for constructing graph embeddings.
For a given dimension $k$, the $\wl{k}$\footnote{What we refer to as $\wl{k}$ is sometimes referred to as the ``folklore'' $k$-dimensional Weisfeiler-Leman test.} test iteratively propagates features for $k$-tuples of vertices and again relies on the adjacency structure of the graph \citep{grohe_otto_2015,grohe_2017}.  From a logic perspective, two graphs are indistinguishable by $\wl{k}$ if and only if they are indistinguishable by sentences in the $(k+1)$-variable fragment of first-order logic with counting and their expressive power is known to increase with increasing $k$ \citep{CaiFI92}.

The focus of this paper on $\wl{2}$. By using a graph product construction, \MPNNs can be used to match the distinguishing power of $\wl{2}$ \citep{grohewl}. The vertices on which the \MPNN act are now triples of vertices and a notion of adjacency between such triples is considered\footnote{To be more precise: a set-based version of $\wl{2}$ was considered in \citet{grohewl} where ``vertices'' $(u,v,w)$ correspond to a set of three vertices $\{u,v,w\}$, and two 
vertices $(u,v,w)$ and $(u',v',w')$ are adjacent if and only if $|\{u,v,w\}\cap \{u',v',w'\}|=2$.
}. A disadvantage of this approach is that one has to deal with $\mathcal{O}(n^{3})$ many embeddings. On the positive side, the dimension of the features is $\mathcal{O}(n^2)$. 
More closely in spirit to \GINs, \citet{maron2018invariant} introduced higher-order (linear) invariant graph neural networks (\GNNs) that use third-order tensors in $\Rb^{n^3\times s}$ and \MLPs to simulate $\wl{2}$ \citep{DBLP:conf/nips/MaronBSL19} . Also here, $\mathcal{O}(n^{3})$ many embeddings are used. It is not known whether  third-order \GNNs are also bounded in expressive power by $\wl{2}$\footnote{We remark that it has recently been shown in \citet{chen2020graph} that second-order linear \GNNs are bounded in expressive power by $\wl{1}$ on undirected graphs.}.
 We remark that the constructions provided in \citet{grohewl} and \citet{DBLP:conf/nips/MaronBSL19}  generalise to $\wl{k}$ by using multiple graph products and higher-order tensors, respectively. A more detailed overview of these approaches and results can be found in the recent survey by \citet{Sato2020ASO}. 

Perhaps the most promising approach related to $\wl{2}$ is the one  presented in \citet{DBLP:conf/nips/MaronBSL19}.
In that paper, simple second-order invariant \GNNs are introduced, using second-order tensors in $\Rb^{n^2 \times s}$ and \MLPs, which can simulate $\wl{2}$. A crucial ingredient in these networks is that the layers are non-linear. More specifically, the non-linearity stems from the use of a single matrix multiplication in each layer. This approach only requires to deal  with $\mathcal{O}(n^2)$ many embeddings making them more applicable than previous approaches. The downside is that the dimension of features needed increases in each round.  In this paper we
zoom in into those second-order non-linear \GNNs and aim to provide some deeper insights. The contributions made in this paper can be summarised as follows.
\begin{enumerate}
\item \label{cont:one} We first introduce $\ell$-walk \MPNNs in order to model second-order non-linear invariant \GNNs. Walk \MPNNs 
operate on pairs of vertices and can aggregate feature information along walks of a certain length $\ell$ in graphs. We show that
$\ell$-walk \MPNNs are bounded in expressive power by the $\ell$-walk refinement procedure ($\mathsf{W[\ell]}$) recently introduced by~\citet{lichter2019walk}. Furthermore, we show that $\ell$-walk \MPNNs match the expressive power of $\mathsf{W[\ell]}$.

\item We verify that second-order non-linear invariant \GNNs  are instances of $2$-walk 
\MPNNs. A direct consequence is that their expressive power is bounded by $\mathsf{W[2]}$ which is known to correspond to $\wl{2}$~\citep{lichter2019walk}. Intuitively,
walks of length two correspond to the use of a single matrix multiplication in \GNNs\footnote{We recall that for an adjacency matrix $\mathbf{A}_G$ of a graph $G$, the entries in $\mathbf{A}_G^\ell$ correspond to the number of walks of length $\ell$ between pairs of vertices.}. We recall from \citet{DBLP:conf/nips/MaronBSL19} 
that  second-order non-linear invariant \GNNs are also as expressive as $\wl{2}$.

\item We generalise second-order non-linear invariant \GNNs by allowing $\ell-1$ matrix multiplications in each layer, for $\ell\geq 2$, and
verify that these networks can be seen as instances of $\ell$-walk \MPNNs. They are thus bounded in expressive power by $\mathsf{W[\ell]}$. We generalise the construction given in \citet{DBLP:conf/nips/MaronBSL19} and show that they also match $\mathsf{W[\ell]}$ in expressive power.

\item Based on the properties of $\mathsf{W[\ell]}$ and $\wl{2}$ reported in~\citet{lichter2019walk}, we observe that allowing for multiple matrix multiplications does not increase the expressive power of second-order \GNNs, but vertices and graphs can potentially be distinguished faster (in a smaller number of rounds) than when using only a single matrix multiplication.

\item In order to reduce the feature dimensions needed we consider the setting in which the features are taken from a countable domain, just as in~\citet{xhlj19}. In this setting, we observe that a constant feature dimension suffices to model $\wl{2}$ and $\mathsf{W}[\ell]$. We recall than when the features are taken from the reals, the second-order \GNNs mentioned earlier require  increasing feature dimensions in each round, just as in \citet{DBLP:conf/nips/MaronBSL19}. We obtain learnable architectures, similar to \GINs, matching $\walk{\ell}$ in expressive power.

\item Finally, we show that the results in~\citet{grohewl} can be generalised by using non-linearity.
As a consequence, we obtain a simple form of $\ell$-walk MPNNs that can simulate $\walk{\ell}$ (and thus also $\wl{2}$)
on a given graph using only $\mathcal{O}(n^2)$ many embeddings. We recall that the higher-order graph neural networks in~\citet{grohewl} require $\mathcal{O}(n^3)$ many embeddings. Furthermore, we preserve the nice property that the dimension of the features is of size $\mathcal{O}(n^2)$.
\end{enumerate}

Our results can be seen as partial answer to the question raised by \citet{openprob}, whether polynomial layers (of degree greater than two) increase the expressive power of second-order invariant \GNNs. We answer this negatively in the restricted setting in which each layer consists of multiple matrix multiplications rather than general equivariant polynomial layers. Indeed, the use of multiple  matrix multiplications can be simulated by single matrix multiplication at the cost of introducing additional layers. 

For readers familiar with \GNNs we summarise the proposed architectures in Table~\ref{tbl:GNNs} and refer for details  to Section~\ref{sec:GNNs}. All architectures generalise to match  $\walk{\ell}$ in expressive power. We note that the last architecture in Table~\ref{tbl:GNNs} is the one proposed by \citet{DBLP:conf/nips/MaronBSL19}.
\begin{table}
$$\begin{array}{ll}\hline
 \text{Dimensions of } \textbf{A}^{(t)} & \text{GNN}\\	\hline\hline
n\times n&	\mathbf{A}^{(t)}_{ij}:= \sum_{k\in[n]} \mathsf{MLP}_{\mathbold{\theta}^{(t)}}\bigl(\mathbf{A}^{(t-1)}_{ik},\mathbf{A}^{(t-1)}_{kij}\bigr)\\
n\times n&	\mathbold{A}^{(t)}_{ij}:=\sum_{k\in [n]} \mathsf{MLP}_{\mathbold{\theta}^{(t)}}\left(\mathsf{MLP}_{\mathbold{\theta}^{(t)}_1}(\mathbold{A}^{(t-1)})_{ik}\cdot \mathsf{MLP}_{\mathbold{\theta}^{(t)}_2}(\mathbold{A}^{(t-1)})_{kj}\right)\\
n\times n\times 2&\mathbold{A}^{(t)}_{ijs}:=\mathsf{MLP}_{\theta^{(t)}_1}\left(
\sum_{k\in [n]} \mathsf{MLP}_{\theta^{(t)}_2}\left(\mathbold{A}^{(t-1)}_{ik1}\cdot \mathbold{A}^{(t-1)}_{kj2}\right)\right)\\
 n\times n\times s_t, s_t\in\mathcal{O}(n^2)&	\mathbf{A}^{(t)}_{ijs}:=\mathsf{ReLU}\left( \sum_{k\in[n]}\sum_{c,d\in [s_{t-1}]}\mathbf{A}^{(t-1)}_{ikc}\cdot\mathbf{A}^{(t-1)}_{kjd}\cdot
	\mathbf{W}^{(t)}_{cds} - q \mathbf{J}_{ijs}\right)\\
 n\times n\times s_t, s_t={n+s_{t-1}\choose s_{t-1}}&\mathbold{A}^{(t)}_{ijs}:=\sum_{k\in[n]} \mathsf{MLP}_{\mathbold{\theta}^{(t)}_1}(\mathbold{A}^{(t-1)})_{iks}\cdot \mathsf{MLP}_{\mathbold{\theta}^{(t)}_2}(\mathbold{A}^{(t-1)})_{kjs}\\\hline
\end{array}$$
\vspace{-3ex}
\caption{Various graph neural network architectures matching $\wl{2}$ in expressive power.}\label{tbl:GNNs}
\end{table}

\paragraph{Organisation of the paper.}
We start by introducing notation and  describing the $2$-dimensional Weisfeiler-Leman ($\wl{2}$) graph isomorphism test and walk refinement procedure ($\walk{\ell}$) in Section~\ref{sec:preliminaries}. To model $\walk{\ell}$ as a kind of \MPNN we introduce $\ell$-walk \MPNNs in Section~\ref{sec:walkMPNN}. In Section~\ref{sec:upperb} we verify that $\ell$-walk \MPNNs are bounded in expressive power by $\walk{\ell}$. Matching lower bounds on the expressive power of $\ell$-walk \MPNNs are provided in Section~\ref{subsec:lowerb} in the case when labels originate from a countable domain, and when they come from an uncountable domain. The obtained insights are used in Section~\ref{sec:GNNs} to build learnable graph neural networks that match $\walk{\ell}$ (and $\wl{2}$ in particular) in expressive power. We conclude the paper in Section~\ref{sec:conclude}.

\section{Preliminaries}\label{sec:preliminaries}
We use $\set{}$ and $\ldbl\rdbl$ to indicate sets and multisets, respectively. 
The sets of natural, rational, and real numbers are denoted by $\Nb$, $\mathbb{Q}$, and $\Rb$, respectively. We write $\Fb^+$ to denote the subset of numbers
from $\Fb$ which are strictly positive, e.g., $\Nb^+=\Nb\setminus\{0\}$.
For $n\in\Nb^+$, we denote with $[n]$ the set of numbers $\{1, \dots , n\}$.

\paragraph{Labelled graphs.} A labelled directed graph is given by ${G = (V, E, \mathbold{\eta})}$ with vertex set $V$, edge relation $E\subseteq V^2$, and where $\mathbold{\eta}\colon E \to \Sigma$ is an edge labelling function into some set~$\Sigma$ of labels.  Without loss of generality we identify $V$ with $[n]$. For $\ell\in\Nb^+$, a walk in $G$ from vertex $i$ to vertex $j$ of length $\ell$ is a sequence of vertices $(i,i_1,i_2,\ldots,i_{\ell-1},j)$ such that each consecutive pair of vertices is an edge in $G$. For $\ell\in\Nb^+$ we denote by $\mathsf{W}^{\ell}_G(i,j)$ the set of walks 
of length $\ell$ in $G$ starting in $i$ and ending at $j$. 

\begin{remark}\label{remark:edgevsvertex}
We opt to work with edge-labelled graphs rather than the more standard vertex-labelled graphs. This does not impose any restriction since we can always turn a vertex-labelled graph into an edge-labelled graph. More specifically, given a vertex-labelled graph $G=(V,E,\mathbold{\nu})$ with $\mathbold{\nu}:V\to\Sigma$ one can define the corresponding edge-labelling $\mathbold{\eta}:E\to \Sigma\times\Sigma$ by  $\mathbold{\eta}(i,j):=\bigl(\mathbold{\nu}(i),\mathbold{\nu}(j)\bigr)$, and then simply consider $G=(V,E,\mathbold{\eta})$ instead of $G=(V,E,\mathbold{\nu})$.  \qed
\end{remark}

\paragraph{Refinements of labellings.} We will need to be able to compare two edge labellings and we do this as follows.
Given two labellings $\mathbold{\eta}:E\to\Sigma$ and $\mathbold{\eta}':E\to\Sigma'$ we say that $\mathbold{\eta}$ \textit{refines} $\mathbold{\eta}'$, denoted by $\mathbold{\eta}\sqsubseteq \mathbold{\eta}'$, if for every 
$(i,j)$ and $(i',j')\in E$, $\mathbold{\eta}(i,j)=\mathbold{\eta}(i',j')$ implies that $\mathbold{\eta}'(i,j)=\mathbold{\eta}'(i',j')$. If $\mathbold{\eta}\sqsubseteq\mathbold{\eta}'$ and $\mathbold{\eta}'\sqsubseteq\mathbold{\eta}$ hold, then $\mathbold{\eta}$ and $\mathbold{\eta}'$ are said to be \textit{equivalent}, and we denote this by $\mathbold{\eta}\equiv\mathbold{\eta}'$.

We next describe two procedures which iteratively generate refinements of edge labellings.
First, we consider the \textit{2-dimensional Weisfeiler-Leman} ($\wl{2}$) procedure. This procedure iteratively
generates edge labellings,  starting from an initial labelling $\mathbold{\eta}$, until no further changes to the edge labelling is made. The labelling produced in round $t$ is denoted  by $\wlrefinement{\mathbold{\eta}}^{(t)}$. Since $\wl{2}$
generates labellings for all pairs of vertices, it is commonly assumed that the input graph is a complete graph, i.e., $E=V^2$. We remark that an incomplete graph $G=(V,E,\mathbold{\eta})$ can always be regarded as a complete graph in which the (extended) edge labelling $\mathbold{\eta}:V^2\to\Sigma$ assigns a special label to non-edges, i.e., those pairs in $V^2\setminus E$.

Let ${G = (V, E, \mathbold{\eta})}$ be a (complete) labelled graph. Then the initial labelling produced by $\wl{2}$ is defined as $\wlrefinement{\mathbold{\eta}}^{(0)}:=\mathbold{\eta}$. For $t>0$ and $i,j\in[n]$ we define:
\[
\wlrefinement{\mathbold{\eta}}^{(t)}(i,j):=\textsc{Hash}\left(\wlrefinement{\mathbold{\eta}}^{(t-1)}(i,j),\bldbl\bigl(\wlrefinement{\mathbold{\eta}}^{(t-1)}(i,k),\wlrefinement{\mathbold{\eta}}^{(t-1)}(k,j)\bigr)\mid k\in[n]\brdbl\right),
\]
where $\textsc{Hash}$ injectively maps $(a,S)$ with $a\in\Sigma$ and $S$ a multiset of pairs of labels in $\Sigma$ to a unique label in $\Sigma$. 
It is known that $\wlrefinement{\mathbold{\eta}}^{(t)}\sqsubseteq \wlrefinement{\mathbold{\eta}}^{(t-1)}$, for all $t>0$, and thus the $\wl{2}$ procedure indeed generates refinements of labellings.
We denote by 
$\wlrefinement{\mathbold{\eta}}^{\phantom{(t)}}$ the labelling $\wlrefinement{\mathbold{\eta}}^{(t)}$ such that  $\wlrefinement{\mathbold{\eta}}^{(t+1)}\equiv \wlrefinement{\mathbold{\eta}}^{(t)}$. It is known that $\wlrefinement{\mathbold{\eta}}^{\phantom{(t)}}$ is reached using at most $t={\cal O}(n\log(n))$ rounds, where $n=|V|$~\citep{lichter2019walk}.

One can simplify $\wl{2}$ by assuming that the initial labelling $\mathbold{\eta}$ assigns different labels
to loops (i.e., pairs of the form $(i,i)$ for $i\in [n]$) than it does to other edges. In other words, when for every $i,j,k\in[n]$ such that $j\neq k$, ${\mathbold{\eta}(i,i) \neq \mathbold{\eta}(j,k)}$ holds. Under this assumption, one can equivalently consider:
$$ \wlrefinement{\mathbold{\eta}}^{(t)}(i,j):=\textsc{Hash}\left(\bldbl\bigl(\wlrefinement{\mathbold{\eta}}^{(t-1)}(i,k),\wlrefinement{\mathbold{\eta}}^{(t-1)}(k,j)\bigr)\mid k\in[n]\brdbl\right).$$
In the following, we always assume that $\mathbold{\eta}$ treats loops differently from non-loops. One can always ensure this by modifying the labels of a given edge labelling.

To make $\wl{2}$ invariant under graph isomorphisms one additionally requires that the initial edge-labelling respects
transpose equivalence, i.e., for any
	 $i,j,i',j'\in[n]$, $\mathbold{\eta}(i,j)=\mathbold{\eta}(i',j')$ implies that $\mathbold{\eta}(j,i)=\mathbold{\eta}(j',i')$. In the following we always assume that this assumption holds. One
	 can again ensure this by applying an appropriate modification to a given edge labelling. We also note that this assumption is satisfied when the edge labelling originates from a vertex labelling, as explained in Remark~\ref{remark:edgevsvertex}. 

The second  procedure which we consider is the \textit{$\ell$-walk refinement procedure} ($\walk{\ell}$), recently introduced by \citet{lichter2019walk}. Similar to $\wl{2}$, it iteratively generates labellings. The labelling produced by $\walk{\ell}$ in round $t$ is denoted by $\walkrefinementb{\mathbold{\eta}^{(t)}}{\ell}$. The initial labelling is defined as $\walkrefinementb{\mathbold{\eta}^{(0)}}{\ell}:=\mathbold{\eta}$, just as for $\wl{2}$. For $t>0$ and 
$i,j\in [n]$ we define:
$$
\walkrefinementb{\mathbold{\eta}^{(t)}}{\ell}(i,j)
:=\textsc{Hash}\left(\bldbl\bigl(\walkrefinementb{\mathbold{\eta}^{(t-1)}}{\ell}(i,i_1),\walkrefinementb{\mathbold{\eta}^{(t-1)}}{\ell}(i_1,i_2),\ldots,\walkrefinementb{\mathbold{\eta}^{(t-1)}}{\ell}(i_{\ell-1},j)\bigr)\mid i_1,\ldots,i_{\ell-1}\in[n]\brdbl\right),
$$
where $\textsc{Hash}$ now injectively maps multisets of $\ell$ pairs of labels in $\Sigma$ to a unique label in $\Sigma$. 

We observe that $\wlrefinement{\mathbold{\eta}}^{(t)}=\walkrefinementb{\mathbold{\eta}}{2}^{(t)}$. Furthermore, for every $t>0$, 
$\walkrefinementb{\mathbold{\eta}}{\ell}^{(t)}\sqsubseteq \walkrefinementb{\mathbold{\eta}}{\ell}^{(t-1)}$ and thus also  $\walk{\ell}$ generates refinements of labellings. We define $\walkrefinementb{\mathbold{\eta}^{\phantom{(t)}}}{\ell}$ as 
the labelling $\walkrefinementb{\mathbold{\eta}}{\ell}^{(t)}$ such that $\walkrefinementb{\mathbold{\eta}}{\ell}^{(t+1)}\equiv\walkrefinementb{\mathbold{\eta}}{\ell}^{(t)}$. We further recall from
\citet{lichter2019walk} that $\walkrefinementb{\mathbold{\eta}}{\ell}^{(t)}\sqsubseteq \walkrefinementb{\mathbold{\eta}}{k}^{(t)}$
for $k\leq\ell$ and that $\wlrefinement{\mathbold{\eta}}^{(t\lceil\log \ell\rceil)}\sqsubseteq \walkrefinementb{\mathbold{\eta}}{\ell}^{(t)}$ for all $t\geq 0$. In particular, $\walkrefinementb{\mathbold{\eta}^{\phantom{(t)}}}{\ell}\equiv \wlrefinement{\mathbold{\eta}}^{\phantom{(t)}}$.

\begin{remark}
We thus see that both procedures generate the same labelling after a (possibly different) number of rounds. The labellings obtained by the two procedures may be different, however, in each round, except for $t=0$, as is illustrated in \citet{lichter2019walk}. Furthermore, if $\wlrefinement{\mathbold{\eta}}^{\phantom{(t)}}$ is reached in $T$ rounds by the $\wl{2}$ procedure, then it is reached in $T/\lceil\log\ell\rceil$ rounds by the $\walk{\ell}$ procedure.
\qed
\end{remark}

\paragraph{Labellings and matrices.}
Given a tensor $\mathbf{A}\in\mathbf{R}^{n^2\times s}$ we denote by $\mathbf{A}_{ijk}\in\Rb$ its
entry at position $i,j\in[n]$ and $k\in [s]$,  by $\mathbf{A}_{ij\bullet}\in\Rb^{s}$ the vector at position
$i,j\in[n]$, and by $\mathbf{A}_{i\bullet\bullet}\in\Rb^{n\times s}$ the matrix at position
$i\in[n]$. Similar notions are in place for matrices and higher-order tensors.
 A tensor $\mathbf{A}\in\mathbf{R}^{n^2\times s}$ naturally corresponds to an edge labelling $\mathbold{\eta}:E\to\Rb^s$ by letting $\mathbold{\eta}(i,j):=\mathbold{A}_{ij\bullet}$ for $i,j\in[n]$. 
Conversely, when given an edge labelling $\mathbold{\eta}:E\to\Sigma$, for $E=V^2$, we assume that we can
encode the labels in $\Sigma$ as vectors in some $\Rb^s$.
A common way to do this is by hot-one encoding labels in $\Sigma$ by basis vectors in $\Rb^s$ for some $s\in\Nb$.
In this way, $\mathbold{\eta}$ can be regarded as a tensor
in $\Rb^{n^2\times s}$.  We interchangeably
consider edge labels and edge labellings as vectors and tensors, respectively. 

\section{Walk Message Passing Neural Networks}\label{sec:walkMPNN}
We start by extending  \MPNNs such that they can easily model the walk-refinement procedure described above. This generalisation of \MPNNs is such that  message passing occurs between pairs of vertices and is restricted by walks in graphs, rather than between single vertices and their adjacent vertices as in  standard \MPNNs \citep{GilmerSRVD17}. We will refer to this generalisation as \textit{walk} \MPNNs. 

Walk \MPNNs iteratively compute edge labellings starting from an input labelled graph $G =(V, E, \mathbold{\eta})$. We refer to each iteration as a \textit{round}.
Walk \MPNNs are parametrised by a number $\ell\in\Nb$, with $\ell\geq 2$, which bounds the length of walks considered, and
we refer to them as \textit{$\ell$-walk} \MPNNs. 
We assume that the edge labelling of the input graph is of the form $\mathbold{\eta}:E\to \Rb^{s_0}$ for some $s_0\in\Nb^+$. In what follows we fix the number of vertices to be $n$.

After round $t\geq 0$, the labelling returned by an $\ell$-walk \MPNN  $M$ is denoted by
$\mathbold{\eta}^{(t)}_M$ and is of the form $\mathbold{\eta}^{(t)}_M:E\to\Rb^{s_t}$, for some $s_t\in\Nb^+$. We omit the dependency on the input graph $G$ in the labellings unless specified otherwise.
We next detail how $\mathbold{\eta}^{(t)}_M$ is computed.
\begin{description}\setlength{\itemsep}{-0.4ex}
\item [Initialisation.]  We let $\mathbold{\eta}^{(0)}_M:=\mathbold{\eta}$.
\end{description}
Then, for every round $t=1,2,\ldots$ we define
$\mathbold{\eta}^{(t)}_M:E\to\Rb^{s_t}$, as follows:
\begin{description}\setlength{\itemsep}{-0.4ex}
\item [Message Passing.] 
Each pair $(v,w)\in V^2$ receives messages from ordered sequences of edges on walks in $G$ of length
$\ell$ starting in $v$ and ending at $w$. These messages are
subsequently aggregated. Formally, if $(v,v_1,\ldots,v_{\ell-1},w)$ is a walk of length $\ell$ in $G$ then
the function $\textsc{Msg}^{(t)}$ receives the labels (computed in the previous round)  $\mathbold{\eta}^{(t-1)}_M(v,v_1)$,  
$\mathbold{\eta}^{(t-1)}_M(v_{1},v_{2}),\ldots, \mathbold{\eta}^{(t-1)}_M(v_{\ell-1},w)$ of the edges in this walk, 
 and outputs a label in $\Rb^{s'_t}$, for some $s_t'\in\Nb^+$. Then, for
every pair $(v,w)\in V^2$ we aggregate by summing all the received labels:
$$
\mathbold{m}^{(t)}_M(v,w):=\sum_{(v,v_1,\dots,v_{\ell-1},w)\in
\mathsf{W}_G^\ell(v,w)}\!\!\!\!\!\!\!\!\textsc{Msg}^{(t)}_M\left(\mathbold{\eta}^{(t-1)}_M(v,v_1),\mathbold{\eta}^{(t-1)}_M(v_1,v_2),\ldots,\mathbold{\eta}^{(t-1)}_M(v_{\ell-1},w)\right)\in\Rb^{s'_t}.
$$
\item [Updating.] Each pair $(v,w)\in V^2$ further updates $\mathbold{m}{}^{(t)}_M(v,w)$
  based on its current label $\mathbold{\eta}^{(t-1)}_M(v,w)$:
$$
\mathbold{\eta}^{(t)}_M(v,w):=\textsc{Upd}^{(t)}_M\left(\mathbold{\eta}^{(t-1)}_M(v,w),\mathbold{m}^{(t)}_M(v,w)\right)\in\Rb^{s_t}.
$$
\end{description}
Here, the message functions $\textsc{Msg}^{(t)}_M$ and update functions
$\textsc{Upd}^{(t)}_M$ are arbitrary functions. 
When a walk \MPNN $M$ only iterates for
a finite number of rounds $T$,  we
define the final labelling $\mathbold{\eta}_M:E\to\Rb^{s}$ with $s=s_T$ returned by $M$ on $G=(V,E,\mathbold{\eta})$, as
$\mathbold{\eta}_M^{\phantom{(t)}}(v,w):=\mathbold{\eta}^{(T)}_M(v,w)$ for every $v,w\in V$. If further aggregation
over the entire graph is needed, e.g., for graph classification, an additional
readout function $\textsc{ReadOut}_M(\ldbl\mathbold{\eta}_M^{\phantom{(t)}}(v,w)\mid v,w\in V\rdbl)$ can be
applied. We ignore the read-out function in this paper as most of the computation happens by means of the message and update functions. We  do comment on read-out functions in Remark~\ref{rem:readout} in Section~\ref{sec:GNNs}.

\section{Upper bound on the expressive power of walk \MPNNs}\label{sec:upperb}
We start by showing that the expressive power of $\ell$-walk \MPNNs is bounded by the expressive power
of $\walk{\ell}$ just as \MPNNs are bounded in expressive power by
 $\wl{1}$. The proof of the following proposition is a straightforward
 modification of the proofs given in \citet{xhlj19} and \citet{grohewl}.

\begin{proposition}\label{prop:upperbw}
For any $\ell$-walk \MPNN $M$, any graph $G=(V,E,\mathbold{\eta})$, and every $t\geq 0$, $\mathbold{\eta}_{\walk{\ell}}^{(t)}\sqsubseteq \mathbold{\eta}_M^{(t)}$. 
\end{proposition}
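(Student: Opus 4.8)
The plan is to prove this by induction on the round number $t$, showing that the labelling $\mathbold{\eta}_{\walk{\ell}}^{(t)}$ refines $\mathbold{\eta}_M^{(t)}$, i.e.\ whenever two pairs $(i,j)$ and $(i',j')$ receive the same $\walk{\ell}$-label in round $t$, they also receive the same $M$-label in round $t$. The base case $t=0$ is immediate since both labellings equal the input labelling $\mathbold{\eta}$ (modulo the encoding of labels as vectors, which is injective on the relevant finite set of labels, so equality of input labels is preserved).

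For the inductive step, assume $\mathbold{\eta}_{\walk{\ell}}^{(t-1)}\sqsubseteq\mathbold{\eta}_M^{(t-1)}$, and suppose $\mathbold{\eta}_{\walk{\ell}}^{(t)}(i,j)=\mathbold{\eta}_{\walk{\ell}}^{(t)}(i',j')$. Because $\textsc{Hash}$ is injective, this equality means that the multisets
$$\bldbl\bigl(\mathbold{\eta}_{\walk{\ell}}^{(t-1)}(i,i_1),\ldots,\mathbold{\eta}_{\walk{\ell}}^{(t-1)}(i_{\ell-1},j)\bigr)\st i_1,\ldots,i_{\ell-1}\in[n]\brdbl$$
and the corresponding one for $(i',j')$ are equal. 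Fix a bijection $\pi$ on the set of index tuples $(i_1,\ldots,i_{\ell-1})\in[n]^{\ell-1}$ witnessing this multiset equality, so that for each tuple $\bar\imath$ the $\ell$-tuple of $\walk{\ell}$-labels along the walk from $i$ to $j$ through $\bar\imath$ equals the $\ell$-tuple along the walk from $i'$ to $j'$ through $\pi(\bar\imath)$, componentwise. By the induction hypothesis applied componentwise, this implies that the corresponding $\ell$-tuples of $M$-labels $\mathbold{\eta}_M^{(t-1)}$ also agree componentwise; in particular the two walks exist simultaneously (a walk exists iff all its edges are edges, which under the complete-graph convention is automatic, so this point is vacuous, but if one works with the walk sets $\mathsf{W}_G^\ell$ it follows since the edge/non-edge distinction is recorded in $\mathbold{\eta}$ and hence preserved). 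Applying the fixed message function $\textsc{Msg}^{(t)}_M$ to these agreeing tuples yields equal messages, and since $\pi$ is a bijection, summing over all tuples gives $\mathbold{m}_M^{(t)}(i,j)=\mathbold{m}_M^{(t)}(i',j')$. Finally, the induction hypothesis also gives $\mathbold{\eta}_M^{(t-1)}(i,j)=\mathbold{\eta}_M^{(t-1)}(i',j')$ (from the $\ell$-tuples agreeing, or directly), so applying $\textsc{Upd}^{(t)}_M$ to the pair of equal arguments yields $\mathbold{\eta}_M^{(t)}(i,j)=\mathbold{\eta}_M^{(t)}(i',j')$, completing the induction.

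The main obstacle, and the only point requiring care, is the bookkeeping around multiset equality: the equality of $\textsc{Hash}$-outputs gives a permutation matching the $\ell$-tuples of $\walk{\ell}$-labels, and one must check that this same permutation transports $M$-labels correctly and commutes with the summation in the aggregation step. The key observation making this work is that $\textsc{Msg}^{(t)}_M$ depends only on the tuple of previous-round labels along the walk — not on the identities of the intermediate vertices — so once the label tuples are matched by $\pi$, the per-walk messages match, and summation over a bijectively reindexed family is unchanged. Everything else (injectivity of $\textsc{Hash}$, determinism of $\textsc{Msg}$ and $\textsc{Upd}$) is routine, which is why this is ``a straightforward modification'' of the arguments in \citet{xhlj19} and \citet{grohewl}; the novelty is purely that messages now range over length-$\ell$ walks rather than single neighbours.
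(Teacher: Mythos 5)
Your proof is correct and follows essentially the same route as the paper's: induction on $t$, injectivity of $\textsc{Hash}$ yielding a bijection on the tuples of intermediate vertices, transport of that bijection to the $M$-labels via the induction hypothesis, and invariance of the sum-aggregation under bijective reindexing. The only place you are looser than the paper is in justifying $\mathbold{\eta}_M^{(t-1)}(i,j)=\mathbold{\eta}_M^{(t-1)}(i',j')$ before applying $\textsc{Upd}^{(t)}_M$: the paper derives this from the refinement property $\mathbold{\eta}_{\walk{\ell}}^{(t)}\sqsubseteq\mathbold{\eta}_{\walk{\ell}}^{(t-1)}$ (cited from Lichter et al.) together with the induction hypothesis, whereas your parenthetical ``from the $\ell$-tuples agreeing, or directly'' leaves that step implicit --- note in particular that the bijection $\pi$ need not match the tuple witnessing $(i,j)$ itself to the one witnessing $(i',j')$, so the refinement property is genuinely what is needed here.
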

\begin{proof}
Let $M$ be an $\ell$-walk \MPNN.	
 We verify the proposition by induction on the number of rounds $t$. Clearly, when $t=0$, $\walkrefinementb{\mathbold{\eta}^{(0)}}{\ell}=\mathbold{\eta}=\mathbold{\eta}_M^{(0)}$, so we can focus on $t>0$. Suppose that $\walkrefinementb{\mathbold{\eta}^{(t-1)}}{\ell}\sqsubseteq \mathbold{\eta}_M^{(t-1)}$ holds. We need to show that  $\walkrefinementb{\mathbold{\eta}^{(t)}}{\ell}\sqsubseteq  \mathbold{\eta}_M^{(t)}$ holds as well.

\noindent
Let $v,w,v',w'$ be vertices for which
$\walkrefinementb{\mathbold{\eta}^{(t)}}{\ell}(v,w)=\walkrefinementb{\mathbold{\eta}^{(t)}}{\ell}(v',w')$ is satisfied. By definition of $\walk{\ell}$ this implies that 
$$\bldbl \bigl(\walkrefinementb{\mathbold{\eta}^{(t-1)}}{\ell}(v,v_1),\ldots,\walkrefinementb{\mathbold{\eta}^{(t-1)}}{\ell}(v_{\ell-1},w)\bigr) \brdbl_{v_1,\ldots,v_{\ell-1}\in V}=\bldbl \bigl(\walkrefinementb{\mathbold{\eta}^{(t-1)}}{\ell}(v',v_1),\ldots,\walkrefinementb{\mathbold{\eta}^{(t-1)}}{\ell}(v_{\ell-1},w')\bigr) \brdbl_{v_1,\ldots,v_{\ell-1}\in V},
$$
or in other words, there exists a bijection $\imath:[n]^{\ell-1}\to [n]^{\ell-1}$ such that for every $v_1,\ldots,v_{\ell-1}$ in $V$,
$$\bigl(\walkrefinementb{\mathbold{\eta}^{(t-1)}}{\ell}(v,v_1),\ldots,\walkrefinementb{\mathbold{\eta}^{(t-1)}}{\ell}(v_{\ell-1},w)\bigr)=\bigl(\walkrefinementb{\mathbold{\eta}^{(t-1)}}{\ell}(v',w_1),\ldots,\walkrefinementb{\mathbold{\eta}^{(t-1)}}{\ell}(w_{\ell-1},w')\bigr),
$$
with $(w_1,\ldots,w_{\ell-1})=\imath(v_1,\ldots,v_{\ell-1})$. By induction, this also implies that 
for every $v_1,\ldots,v_{\ell-1}$ there are unique $w_1,\ldots,w_{\ell-1}$ such that
$$\bigl(\mathbold{\eta}^{(t-1)}_M(v,v_1),\ldots,\mathbold{\eta}^{(t-1)}_M(v_{\ell-1},w)\bigr)=\bigl(\mathbold{\eta}^{(t-1)}_M(v',w_1),\ldots,\mathbold{\eta}^{(t-1)}_M(w_{\ell-1},w')\bigr)
$$
holds. This in turn implies that for every $v_1,\ldots,v_{\ell-1}$ there are unique $w_1,\ldots,w_{\ell-1}$ such that
$$
\textsc{Msg}_M^{(t)}\left(\mathbold{\eta}^{(t-1)}_M(v,v_1),\ldots,\mathbold{\eta}^{(t-1)}_M(v_{\ell-1},w)\right)=
\textsc{Msg}_M^{(t)}\left(\mathbold{\eta}^{(t-1)}_M(v',w_1),\ldots,\mathbold{\eta}^{(t-1)}_M(w_{\ell-1},w')\right)
$$
is satisfied. As a consequence,
 $\mathbold{m}_M^{(t)}(v,w)=\mathbold{m}_M^{(t)}(v',w')$ since these are defined by summing up the messages over all $v_1,\ldots,v_{\ell-1}$ and $w_1,\ldots,\allowbreak w_{\ell-1}$, respectively. We also note that if 
$\walkrefinementb{\mathbold{\eta}^{(t)}}{\ell}(v,w)=\walkrefinementb{\mathbold{\eta}^{(t)}}{\ell}(v',w')$ holds, then
$\walkrefinementb{\mathbold{\eta}^{(t-1)}}{\ell}(v,w)=\walkrefinementb{\mathbold{\eta}^{(t-1)}}{\ell}(v',w')$~\citep{lichter2019walk}. Hence
also $\mathbold{\eta}^{(t-1)}_M(v,w)=\mathbold{\eta}^{(t-1)}_M(v',w')$ holds by induction. We may thus conclude that 
$$
\mathbold{\eta}_M^{(t)}(v,w)=\textsc{Upd}^{(t)}_M\left(\mathbold{\eta}^{(t-1)}_M(v,w),\mathbold{m}_M^{(t)}(v,w)\right)=
\textsc{Upd}_M^{(t)}\left(\mathbold{\eta}^{(t-1)}_M(v',w'),\mathbold{m}_M^{(t)}(v',w')\right)=\mathbold{\eta}_M^{(t)}(v',w'),
$$
holds, as desired.
\end{proof}

As already mentioned in the preliminaries, $\walkrefinementb{\mathbold{\eta}^{(t)}}{2}\equiv \wlrefinement{\mathbold{\eta}}^{(t)}$ and $\walkrefinementb{\mathbold{\eta}^{(t\log\lceil \ell\rceil)}}{2}\preceq \walkrefinementb{\mathbold{\eta}^{(t)}}{\ell}$ for all $t\geq 0$. We may thus also infer the following.
\begin{corollary}
For every $\ell$-walk \MPNN $M$, any graph $G=(V,E,\mathbold{\eta})$, and $t\geq 0$, $\wlrefinement{\mathbold{\eta}}^{(t\lceil\log\ell\rceil)}\sqsubseteq\mathbold{\eta}_{M}^{(t)}$.\qed
\end{corollary}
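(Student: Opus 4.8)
The plan is to derive the corollary by simply \emph{composing} Proposition~\ref{prop:upperbw} with a refinement inequality between $\wl{2}$ and $\walk{\ell}$ that was already recorded in Section~\ref{sec:preliminaries} (and proved in \citet{lichter2019walk}). The only auxiliary fact needed is transitivity of the refinement relation $\sqsubseteq$, which I would note at the outset: if $\mathbold{\eta}\sqsubseteq\mathbold{\eta}'$ and $\mathbold{\eta}'\sqsubseteq\mathbold{\eta}''$, then for any $(i,j),(i',j')\in E$ with $\mathbold{\eta}(i,j)=\mathbold{\eta}(i',j')$ we get $\mathbold{\eta}'(i,j)=\mathbold{\eta}'(i',j')$, hence $\mathbold{\eta}''(i,j)=\mathbold{\eta}''(i',j')$, so $\mathbold{\eta}\sqsubseteq\mathbold{\eta}''$. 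This is immediate from the definition and requires no computation.

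Next I would fix an $\ell$-walk \MPNN $M$, a graph $G=(V,E,\mathbold{\eta})$, and $t\geq 0$. By Proposition~\ref{prop:upperbw} we have $\walkrefinementb{\mathbold{\eta}^{(t)}}{\ell}\sqsubseteq\mathbold{\eta}_M^{(t)}$. On the other hand, the preliminaries recall from \citet{lichter2019walk} that $\wlrefinement{\mathbold{\eta}}^{(t\lceil\log\ell\rceil)}\sqsubseteq\walkrefinementb{\mathbold{\eta}^{(t)}}{\ell}$ for all $t\geq 0$ (this uses $\wlrefinement{\mathbold{\eta}}^{(r)}=\walkrefinementb{\mathbold{\eta}^{(r)}}{2}$ together with $\walkrefinementb{\mathbold{\eta}^{(r)}}{2}\sqsubseteq\walkrefinementb{\mathbold{\eta}^{(t)}}{\ell}$ instantiated at $r=t\lceil\log\ell\rceil$). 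Chaining these two inclusions by transitivity yields $\wlrefinement{\mathbold{\eta}}^{(t\lceil\log\ell\rceil)}\sqsubseteq\mathbold{\eta}_M^{(t)}$, which is exactly the claimed statement. As a sanity check, for $\ell=2$ we have $\lceil\log 2\rceil=1$ and the corollary reduces to $\wlrefinement{\mathbold{\eta}}^{(t)}\sqsubseteq\mathbold{\eta}_M^{(t)}$, consistent with Proposition~\ref{prop:upperbw} since $\wlrefinement{\mathbold{\eta}}^{(t)}=\walkrefinementb{\mathbold{\eta}^{(t)}}{2}$.

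I do not expect any genuine obstacle here: the corollary is a one-step consequence of a result proved immediately above and a fact quoted verbatim from \citet{lichter2019walk}. The substantive content — comparing a single round of $\walk{\ell}$ against $\lceil\log\ell\rceil$ rounds of $\wl{2}$ — is precisely the part that is outsourced to \citet{lichter2019walk}. The only point I would be careful about is that the cited inequality must hold at the \emph{intermediate} round indices $t$, not merely at the stable labelling; since it is stated in the preliminaries "for all $t\geq 0$", no additional argument (e.g.\ about monotonicity of $\wl{2}$ in the number of rounds) is needed, and the proof is essentially two lines.
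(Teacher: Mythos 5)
Your proposal is correct and matches the paper's own (implicit) argument exactly: the paper derives the corollary by combining Proposition~\ref{prop:upperbw} with the fact $\wlrefinement{\mathbold{\eta}}^{(t\lceil\log\ell\rceil)}\sqsubseteq\walkrefinementb{\mathbold{\eta}}{\ell}^{(t)}$ quoted from \citet{lichter2019walk} in the preliminaries, using transitivity of $\sqsubseteq$ just as you do. Your sanity check at $\ell=2$ and your remark that the cited inequality is needed at intermediate rounds are both apt and add nothing that contradicts the paper.
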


We may thus conclude that for $\ell\geq 2$, $\ell$-walk \MPNNs are limited in their distinguishing power by $\wl{2}$, but they may reach the final labelling $\mathbold{\eta}_{\wl{2}}^{\phantom{(t)}}$ faster than by using $2$-walk \MPNNs. This comes at the cost, however, of a computationally more intensive messaging passing phase.
We next show that $\ell$-walk \MPNNs can also simulate $\walk{\ell}$ from which we can infer
that $\ell$-walk \MPNNs match $\walk{\ell}$ in their expressive power.

\section{Lower bound on the expressive power of $\mathbold{\ell}$-walk \MPNNs}\label{subsec:lowerb}
We next show how to simulate $\walk{\ell}$ by means of $\ell$-walk \MPNNs. In particular, we show that they
can simulate $\walk{\ell}$ on all graphs of a fixed size ($|V|=n$). We provide two simulations, one for when the labels come from a countable domain, and one for when the labels come from an uncountable domain, such as $\Rb^a$ for some $a\in\Nb^+$.

The challenge is to simulate
the hash function used in $\walk{\ell}$ by means of message and update functions, hereby taking into consideration
that $\ell$-walk \MPNNs always perform a sum aggregation over the received messages\footnote{One could also extend the aggregate/combine formalisms used in~\citet{xhlj19} and~\citet{grohewl}. In that formalism, one can define 
$\mathbold{\eta}^{(t)}(i,j):=f_{\textsl{agg}}\left(\ldbl (\mathbold{\eta}^{(t-1)}(i,i_1),\mathbold{\eta}^{(t-1)}(i_1,i_2),\ldots,\mathbold{\eta}^{(t-1)}(i_{\ell_1},j))\mid i_1,\ldots,i_{\ell-1}\in [n] \rdbl\right)$ for some arbitrary aggregate function $f_{\textsl{agg}}$. To simulate $\walk{\ell}$, it then suffices to take $f_{\textsl{agg}}(\cdot)=\textsc{Hash}(\cdot)$ with $\textsc{Hash}(\cdot)$ the hash function used in $\walk{\ell}$. As mentioned already, \MPNNs only allow sum aggregation for $f_{\textsl{agg}}$.
}. 
For the countable case we generalise the technique underlying \GINs~\citep{xhlj19}; for the uncountable case we use multi-symmetric polynomials underlying higher-order graph neural networks \citep{DBLP:conf/nips/MaronBSL19}.

\subsection{Simulating $\walk{\ell}$: Countable case}\label{subsec:countable}
We first consider the setting in which graphs $G=(V,E,\mathbold{\eta})$ have a labelling $\mathbold{\eta}:E\to \Xb$ for some countable domain $\Xb$. Without loss of generality we assume that $\Xb=\Nb$. Indeed, since $\Xb$ is countable the elements in $\Xb$ can be mapped to elements in $\Nb$ by means of an injection. The following result shows that $\ell$-walk \MPNNs can simulate $\walk{\ell}$ on the set of of graphs with $n$ vertices with labels from $\Nb$.

\begin{proposition}\label{prop:lowerb}
For every $\ell\in\Nb$, $\ell\geq 2$, there exists an $\ell$-walk \MPNN $M$  such that
 $\walkrefinementb{\mathbold{\eta}^{(t)}}{\ell}\equiv\mathbold{\eta}^{(t)}_M$ holds for all $t\geq 0$, on any given an input graph ${G = (V, E, \mathbold{\eta})}$ with $\mathbold{\eta}:E\to\Nb$ and $|V|=n$.
\end{proposition}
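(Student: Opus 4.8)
The plan is to construct, for each fixed $n$ and each $\ell\ge 2$, a single $\ell$-walk \MPNN $M$ and to prove $\walkrefinementb{\mathbold{\eta}^{(t)}}{\ell}\equiv\mathbold{\eta}^{(t)}_M$ by induction on $t$, carrying along the extra invariant that $\mathbold{\eta}^{(t)}_M$ takes values in a fixed countable subset $\Xb_t$ of $\Rb$ (in fact with one-dimensional features, $s_t=1$, throughout). Since the message and update functions of an $\ell$-walk \MPNN are allowed to be arbitrary, the entire difficulty is to arrange that the \emph{forced sum aggregation} over walks reproduces, injectively, the multiset that $\walk{\ell}$ hands to its $\textsc{Hash}$; the update function can then simply copy the aggregated value. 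The base case $t=0$ is immediate, since $\mathbold{\eta}^{(0)}_M=\mathbold{\eta}=\walkrefinementb{\mathbold{\eta}^{(0)}}{\ell}$ and $\mathbold{\eta}\colon E\to\Nb$, so we may take $\Xb_0=\Nb$.

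For the inductive step, assume $\walkrefinementb{\mathbold{\eta}^{(t-1)}}{\ell}\equiv\mathbold{\eta}^{(t-1)}_M$ with $\mathbold{\eta}^{(t-1)}_M$ valued in the countable set $\Xb_{t-1}$. As in the preliminaries we regard $G$ as complete (non-edges carrying a distinguished label), so that every tuple $(v,v_1,\dots,v_{\ell-1},w)$ is a walk and $|\mathsf{W}^\ell_G(v,w)|=n^{\ell-1}$ for all $v,w$; in particular the message aggregation is a sum of exactly $N:=n^{\ell-1}$ terms. Fix an injection $\iota_{t-1}\colon\Xb_{t-1}\to\Nb$ (possible by countability) and an injection $\pi\colon\Nb^\ell\to\Nb$ (e.g.\ an iterated Cantor pairing). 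I take $\textsc{Msg}^{(t)}_M$ to be the one-dimensional map
\[
(z_1,\dots,z_\ell)\;\longmapsto\;(N+1)^{-\pi(\iota_{t-1}(z_1),\dots,\iota_{t-1}(z_\ell))}\in\Rb .
\]
Because the multiset along the walks from $v$ to $w$ has exactly $N$ elements, summing $\textsc{Msg}^{(t)}_M$ over it produces a number whose base-$(N+1)$ digit at position $\pi(\iota_{t-1}(c_1),\dots,\iota_{t-1}(c_\ell))$ equals the multiplicity of the $\ell$-tuple $(c_1,\dots,c_\ell)$ (no carrying occurs, as each multiplicity is $<N+1$ and $\pi$ is injective). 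Hence $\mathbold{m}^{(t)}_M(v,w)$ determines, and is determined by, the multiset $\bldbl\bigl(\mathbold{\eta}^{(t-1)}_M(v,v_1),\dots,\mathbold{\eta}^{(t-1)}_M(v_{\ell-1},w)\bigr)\mid v_1,\dots,v_{\ell-1}\in[n]\brdbl$. I then set $\textsc{Upd}^{(t)}_M(a,m):=m$ (so $s_t=1$), and let $\Xb_t$ be the countable set of all values $m$ can take as $G$ ranges over $n$-vertex graphs.

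It remains to verify that this preserves the invariant. By the induction hypothesis there is a bijection $\phi$ between the ranges of $\walkrefinementb{\mathbold{\eta}^{(t-1)}}{\ell}$ and $\mathbold{\eta}^{(t-1)}_M$ with $\mathbold{\eta}^{(t-1)}_M=\phi\circ\walkrefinementb{\mathbold{\eta}^{(t-1)}}{\ell}$ on $E$; applying $\phi$ (resp.\ $\phi^{-1}$) componentwise shows that the multiset of $\ell$-tuples of $\mathbold{\eta}^{(t-1)}_M$-labels along the walks from $v$ to $w$ agrees with that for $(v',w')$ if and only if the corresponding multisets of $\walkrefinementb{\mathbold{\eta}^{(t-1)}}{\ell}$-labels agree. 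Chaining this with the injectivity of the $\textsc{Msg}^{(t)}_M$-sum established above and with the injectivity of the $\textsc{Hash}$ used by $\walk{\ell}$, we obtain that $\mathbold{\eta}^{(t)}_M(v,w)=\mathbold{\eta}^{(t)}_M(v',w')$ if and only if $\walkrefinementb{\mathbold{\eta}^{(t)}}{\ell}(v,w)=\walkrefinementb{\mathbold{\eta}^{(t)}}{\ell}(v',w')$, i.e.\ $\walkrefinementb{\mathbold{\eta}^{(t)}}{\ell}\equiv\mathbold{\eta}^{(t)}_M$, completing the induction.

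The only genuine obstacle is that first equivalence: making the mandatory sum aggregation act as an injective encoding of a multiset. This is exactly where the \GINs-style trick enters, and it goes through here because (i) for fixed $n$ the number $n^{\ell-1}$ of walks is bounded, so a single base-$(n^{\ell-1}+1)$ positional encoding works, and (ii) the intermediate label sets stay countable and can therefore be injected into $\Nb$ before encoding. I note in passing that, since $\textsc{Msg}^{(t)}_M$ and $\textsc{Upd}^{(t)}_M$ above have one-dimensional output, this construction already shows that a constant feature dimension $s_t=1$ suffices in the countable setting, in contrast with the uncountable case treated next.
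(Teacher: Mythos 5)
Your proposal is correct and follows essentially the same route as the paper: encode the multiset of $\ell$-tuples of previous-round labels via a pairing function composed with a power of $(n^{\ell-1}+1)$, so that the forced sum aggregation becomes an injective positional encoding, and let the update function copy the message. The only difference is that you use negative exponents (yielding rational labels and hence a fresh injection $\iota_{t-1}$ into $\Nb$ each round), which is precisely the \GIN-style variant the paper itself describes in a remark, whereas the paper uses positive exponents so that labels stay in $\Nb$ and the same message function can be reused in every round.
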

\begin{proof}
	We define the $\ell$-walk \MPNN $M$ by induction on $t$. More specifically,
	we inductively define the message and update functions of $M$ and verify that $\walkrefinementb{\mathbold{\eta}^{(t)}}{\ell}\equiv\mathbold{\eta}^{(t)}_M$ holds for all $t$ on any
	given an input graph ${G = (V, E, \mathbold{\eta})}$ with $\mathbold{\eta}:E\to\Nb$.
	Furthermore, along the way we verify that for $t>0$, $\mathbold{\eta}^{(t)}_M:E\to \Nb$, i.e., the labels remain to be elements in $\Nb$.
	
	Clearly, by definition, $\walkrefinementb{\mathbold{\eta}^{(0)}}{\ell}=\mathbold{\eta}=\mathbold{\eta}^{(0)}_M$ so we can focus on $t>0$.
	Assume that we have specified $M$ up to round $t-1$ such that $ \walkrefinementb{\mathbold{\eta}^{(t-1)}}{\ell}\equiv\mathbold{\eta}^{(t-1)}_M$ holds, where $\mathbold{\eta}^{(t-1)}_M:E\to\Nb$. We next consider round $t$. 

The labels of a walk of length $\ell$ correspond to an element in $\Nb^\ell$. We want to map these to elements in
$\Nb$ by means of an injection. We can use any pairing function $\tau:\Nb^\ell\to\Nb$ for this purpose\footnote{Since we defined walk \MPNNs over the reals, we assume that $\tau$ extends to
a function $\Rb^\ell\to\Rb$.}. Given such a pairing function, we define 
the function $h:\Nb^\ell\to\Nb$ as
$$
h(a_1,\ldots,a_\ell):=(n^{\ell-1}+1)^{\tau(a_1,\ldots,a_\ell)}.
$$
Then, any multiset $S$ consisting of at most $n^{\ell-1}$ elements in $\Nb^{\ell}$ can be mapped to a number in $\Nb$
by means of the injective function
$$
\varphi(S):=\sum_{(a_1,\ldots,a_{\ell})\in S} h(a_1,\ldots,a_\ell).
$$
Indeed, we here just represent a multiset by its unique $(n^{\ell-1}+1)$-ary representation, just as in \citet{xhlj19}.
It now suffices to define $M$ to consist of the following  message and update functions\footnote{Strictly speaking the message and update functions depend on $n$ which is not allowed by the definition of walk \MPNNs. Since we consider graphs of fixed size, we treat $n$ as as constant. Alternatively, one can incorporate $n$ in the initial labelling and ensure that this value is propagated to all consecutive labellings. In this way, the message and update functions have access to $n$ in every round.} in round $t$: For every $a_1,\ldots,a_\ell\in\Rb$:
 \[
\textsc{Msg}_M^{(t)}(a_1,\ldots,a_{\ell}) :=
h(a_1,\ldots,a_\ell) \in \Rb,
\]
and for every $a,b\in\Rb$,
\[
\textsc{Upd}_M^{(t)}(a,b) := b\in\Rb.\]
It remains to verify that $ \walkrefinementb{\mathbold{\eta}^{(t)}}{\ell}\equiv\mathbold{\eta}^{(t)}_M$ holds. In other words, we need to show that for every
$v,w,v',w'\in V$, 
$$
\walkrefinementb{\mathbold{\eta}^{(t)}}{\ell}(v,w)=\walkrefinementb{\mathbold{\eta}^{(t)}}{\ell}(v',w') \Longleftrightarrow
\mathbold{\eta}^{(t)}_M(v,w)=\mathbold{\eta}^{(t)}_M(v',w').
$$
We define 
for every $v,w\in V$, the multiset
$$
S_{v,w}:=\bldbl \bigl(\mathbold{\eta}^{(t-1)}_M(v,v_1),\ldots,\mathbold{\eta}^{(t-1)}_M(v_{\ell-1},w)\bigr) \brdbl_{v_1,v_2,\ldots,v_{\ell-1}\in [n]}.$$
Hence, 
$\walkrefinementb{\mathbold{\eta}^{(t)}}{\ell}(v,w)=\walkrefinementb{\mathbold{\eta}^{(t)}}{\ell}(v',w')$
if and only if $S_{v,w}=S_{v',w'}$. It now suffices to observe that 
\begin{align*}
	\mathbold{\eta}^{(t)}_M(v,w) &=\sum_{v_1,\ldots,v_{\ell-1}\in V} h\bigl(\mathbold{\eta}^{(t-1)}_M(v,v_1),\ldots,\mathbold{\eta}^{(t-1)}_M(v_{\ell-1},w)\bigr)=\varphi(S_{v,w})
\intertext{ and,}
	\mathbold{\eta}^{(t)}_M(v',w') &=\sum_{v_1,\ldots,v_{\ell-1}\in V} h\bigl(\mathbold{\eta}^{(t-1)}_M(v',v_1),\ldots,\mathbold{\eta}^{(t-1)}_M(v_{\ell-1},w')\bigr)=\varphi(S_{v',w'}).
\end{align*}
Since the multiplicity of every element in the multisets $S_{v,w}$ is bounded by $n^{\ell-1}$,
$\varphi$ is an injection and thus $S_{v,w}=S_{v',w'}$ if and only if $\varphi(S_{v,w})=\varphi(S_{v',w'})$ if and only if
$\mathbold{\eta}^{(t)}_M(v,w)=\mathbold{\eta}^{(t)}_M(v',w')$, 
from which the proposition follows. We note that when the labels assigned by $\mathbold{\eta}^{(t-1)}_M$ belong to $\Nb$, then so do the labels assigned by $\mathbold{\eta}^{(t)}_M$, by the definition of $\varphi$. As a consequence, the $\ell$-walk MPNN $M$ generates labels in $\Nb$ in every round.
\end{proof}

We note that in the simulation above the message and update functions can be fixed, independent of $t$.

\begin{remark}
The function $\varphi$ used in the proof of Proposition~\ref{prop:lowerb} is similar to the one motivating the definition of \GINs~\citep{xhlj19}. The difference is that \citet{xhlj19} incorporate the initial injective mapping from $\Xb$ to $\Nb$ in the first round, and that instead of a representation in $\Nb$, a representation in $\mathbb{Q}$ is used. Translated to our setting this corresponds to defining $h(a_1,\ldots,a_\ell)$ as $(n^{\ell-1}+1)^{-\tau((\imath(a_1),\ldots,\imath(a_\ell))}$ with $\tau$ a pairing function and $\imath:\Xb\to\Nb$ an injection from $\Xb$ to $\Nb$. Since labels now take rational values,
one needs to incorporate an injective mapping from $\mathbb{Q}$ to $\Nb$ in each round $t>1$. By contrast, our simulation produces labels in $\Nb$ for all $t$. \qed
\end{remark}
\begin{remark}
In the standard \MPNN setting, \MPNNs are known to simulate $\wl{1}$ on all graphs with labels in $\Nb$ and that have bounded degree. As such \MPNNs can simulate $\wl{1}$ on graphs of arbitrary size. In our setting, $\walk{\ell}$ assigns labels to all pairs of vertices and the degree is thus always $n$ because the input graphs are complete graphs. Hence, the bounded degree condition reduces to the graphs having a fixed size.\qed
\end{remark}

\subsection{Simulating $\walk{\ell}$: Uncountable case}\label{subsec:uncountable}
We next consider graphs $G=(V,E,\mathbold{\eta})$ with $\mathbold{\eta}:E\to \Rb^{s_0}$ for some $s_0\in\Nb^+$.
We first recall from \citet{DBLP:conf/nips/MaronBSL19} how to, by using multi-symmetric polynomials, assign a unique value in $\Rb^b$ to multisets of $m$ elements in $\Rb^a$ for some $a,b\in\Nb$. Let $a,m\in\Nb$ and let $\mathbold{\alpha}\in [m]^{a}$ be a multi-index, i.e., $\mathbold{\alpha}=(\alpha_1,\ldots,\alpha_{a})$ with $\alpha_i\in[m]$ for $i\in[a]$.
For an element $\mathbold{x}=(x_1,\ldots,x_a)\in\Rb^a$ we write $\mathbold{x}^{\mathbold{\alpha}}:=\prod_{i\in[a]}x_i^{\alpha_i}$ and define $|\mathbold{\alpha}|=\sum_{i\in[a]} \alpha_i$.
Consider a multiset $\mathbold{X}=\ldbl \mathbold{x}_1,\ldots,\mathbold{x}_m\rdbl$ with each $\mathbold{x}_i\in\Rb^a$.
We represent such a multiset by a matrix, also denoted by $\mathbold{X}$, by choosing an arbitrary order on the elements $\mathbold{x}_1,\ldots,\mathbold{x}_m$. More precisely, $\mathbold{X}\in\Rb^{m\times a}$ and $\mathbold{X}_{i\bullet}$
corresponds to one of the $\mathbold{x}_i$'s for each $i\in[m]$. We next define  $p_{\mathbold{\alpha}}(\mathbold{X}):=\sum_{j\in[m]}(\mathbold{X}_{j\bullet})^{\mathbold{\alpha}}$
and let
$
u(\mathbold{X}):=(p_{\mathbold{\alpha}}(\mathbold{X})\mid |\mathbold{\alpha}|\leq m)\in\Rb^b
$, where $b$ corresponds to the number of multi-indexes $\mathbold{\alpha}\in[m]^a$ with $|\mathbold{\alpha}|\leq m$.
More precisely, $b={m+ a \choose a}$.
Then, for $\mathbold{X}$ and $\mathbold{X}'$ in $\Rb^{m\times a}$, $u(\mathbold{X})=u(\mathbold{X}')$ if and only if there exists a permutation $\pi$ of $[m]$ such that
$\mathbold{X}_{j\bullet}=\mathbold{X}_{\pi(j)\bullet}'$ for all $j\in[m]$ (see Proposition 1 in~\citet{DBLP:conf/nips/MaronBSL19}). In other words, by regarding $\mathbold{X}$ and $\mathbold{X}'$ as multisets,
$u(\mathbold{X})=u(\mathbold{X}')$ if and only if $\mathbold{X}$ and $\mathbold{X}'$ represent the same multiset.

\begin{proposition}\label{prop:lowerbreal}
For every $\ell\in\Nb$, $\ell\geq 2$, there exists an $\ell$-walk \MPNN $M$  such that
 $\walkrefinementb{\mathbold{\eta}^{(t)}}{\ell}\equiv\mathbold{\eta}^{(t)}_M$ holds for all $t\geq 0$, on any given an input graph ${G = (V, E, \mathbold{\eta})}$ with $\mathbold{\eta}:E\to\Rb^{s_0}$ and $|V|=n$.
\end{proposition}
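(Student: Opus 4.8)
The plan is to mimic the proof of Proposition~\ref{prop:lowerb}, replacing the multiset encoding $\varphi$ (which relied on $(n^{\ell-1}+1)$-ary representations of natural numbers) by the multi-symmetric polynomial encoding $u$ recalled just above, which is precisely what is needed to injectively encode multisets of vectors in $\Rb^a$. The crucial point is that $u$ is itself a coordinate-wise \emph{sum} over the elements of the multiset (each coordinate is some $p_{\mathbold{\alpha}}(\mathbold{X})=\sum_j (\mathbold{X}_{j\bullet})^{\mathbold{\alpha}}$), so it is compatible with the sum-aggregation that $\ell$-walk \MPNNs are forced to perform.

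Concretely, I would define $M$ by induction on $t$. The initialisation $\mathbold{\eta}^{(0)}_M:=\mathbold{\eta}$ matches $\walkrefinementb{\mathbold{\eta}^{(0)}}{\ell}$ trivially, so fix $t>0$ and assume $M$ has been specified up to round $t-1$ with $\walkrefinementb{\mathbold{\eta}^{(t-1)}}{\ell}\equiv\mathbold{\eta}^{(t-1)}_M$ and $\mathbold{\eta}^{(t-1)}_M:E\to\Rb^{s_{t-1}}$. In round $t$ put $m:=n^{\ell-1}$ (the number of walks of length $\ell$ between any two vertices in the complete graph on $[n]$, hence also the size of each relevant multiset) and $a:=\ell\, s_{t-1}$, and define
$$\textsc{Msg}^{(t)}_M(\mathbold{x}_1,\ldots,\mathbold{x}_\ell):=\bigl((\mathbold{x}_1,\ldots,\mathbold{x}_\ell)^{\mathbold{\alpha}}\mid |\mathbold{\alpha}|\leq m\bigr)\in\Rb^{b},\qquad b={m+a\choose a},$$
where $(\mathbold{x}_1,\ldots,\mathbold{x}_\ell)$ is read as the concatenated vector in $\Rb^a$; that is, the message is the vector of all monomials of degree at most $m$ evaluated at the concatenation of the $\ell$ edge labels along the walk. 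The update function is $\textsc{Upd}^{(t)}_M(\mathbold{a},\mathbold{b}):=\mathbold{b}$. With these choices, summing the messages over all walks of length $\ell$ from $v$ to $w$ yields exactly $u(S^M_{v,w})$, where $S^M_{v,w}:=\bldbl(\mathbold{\eta}^{(t-1)}_M(v,v_1),\ldots,\mathbold{\eta}^{(t-1)}_M(v_{\ell-1},w))\brdbl_{v_1,\ldots,v_{\ell-1}\in[n]}$ regarded as a matrix in $\Rb^{m\times a}$; hence $\mathbold{\eta}^{(t)}_M(v,w)=u(S^M_{v,w})$, and in particular $\mathbold{\eta}^{(t)}_M:E\to\Rb^{s_t}$ with $s_t=b$.

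To conclude $\walkrefinementb{\mathbold{\eta}^{(t)}}{\ell}\equiv\mathbold{\eta}^{(t)}_M$ I would chain three equivalences. First, injectivity of $\textsc{Hash}$ in $\walk{\ell}$ gives that $\walkrefinementb{\mathbold{\eta}^{(t)}}{\ell}(v,w)=\walkrefinementb{\mathbold{\eta}^{(t)}}{\ell}(v',w')$ iff the multisets $S^{\walk{\ell}}_{v,w}$ and $S^{\walk{\ell}}_{v',w'}$ coincide, where these are defined as $S^M_{\cdot,\cdot}$ but with $\walkrefinementb{\mathbold{\eta}^{(t-1)}}{\ell}$ in place of $\mathbold{\eta}^{(t-1)}_M$. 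Second, the induction hypothesis $\walkrefinementb{\mathbold{\eta}^{(t-1)}}{\ell}\equiv\mathbold{\eta}^{(t-1)}_M$ provides a bijection between the round-$(t-1)$ labels, and applying it coordinate-wise to $\ell$-tuples shows $S^{\walk{\ell}}_{v,w}=S^{\walk{\ell}}_{v',w'}$ iff $S^M_{v,w}=S^M_{v',w'}$. Third, since every $S^M_{v,w}$ consists of exactly $m$ elements of $\Rb^a$, Proposition~1 of \citet{DBLP:conf/nips/MaronBSL19} yields $u(S^M_{v,w})=u(S^M_{v',w'})$ iff $S^M_{v,w}=S^M_{v',w'}$, i.e.\ iff $\mathbold{\eta}^{(t)}_M(v,w)=\mathbold{\eta}^{(t)}_M(v',w')$. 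Composing the three gives the claim.

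I do not expect a genuine obstacle: the argument is a routine adaptation of the countable case. The one point that merits care — and which is exactly where this case differs from Proposition~\ref{prop:lowerb} — is the bookkeeping of feature dimensions. Here the label dimension $s_t={n^{\ell-1}+\ell s_{t-1}\choose \ell s_{t-1}}$ strictly grows with $t$, because both the degree bound $m$ and the arity $a$ of the monomials feed into $b$, whereas in the countable case one can keep all labels in $\Nb$ (so $s_t$ is constant). As in Proposition~\ref{prop:lowerb}, the message and update functions formally depend on $n$; since we work with graphs of fixed size this is harmless (or one may encode $n$ in the initial labelling and propagate it), which I would remark in passing. I would also note explicitly the convention that walks are taken in the complete graph, so that $|\mathsf{W}^\ell_G(v,w)|=n^{\ell-1}$ for all $v,w$, which is what makes every multiset $S^M_{v,w}$ of the uniform size $m$ required by the encoding $u$.
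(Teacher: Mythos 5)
Your proposal is correct and follows essentially the same route as the paper's own proof: the same choice of $m=n^{\ell-1}$ and $a=\ell s_{t-1}$, the same message function given by the monomials $(\mathbold{x}_1,\ldots,\mathbold{x}_\ell)^{\mathbold{\alpha}}$ with $|\mathbold{\alpha}|\leq m$, the identity update, and the same chain of equivalences via the injectivity of $u$ on multisets. Your explicit remarks on the dimension growth of $s_t$ and the fixed-size/complete-graph conventions match the paper's surrounding discussion.
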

\begin{proof}
For each $\ell\in\Nb$, $\ell\geq 2$, $n\in\Nb^+$ and $s_0\in\Nb^+$ we define an
	$\ell$-walk \MPNN $M$ such that $\walkrefinementb{\mathbold{\eta}^{(t)}}{\ell}\equiv\mathbold{\eta}^{(t)}_M$ holds on any given an input graph ${G = (V, E, \mathbold{\eta})}$ with $|V|=n$ and $\mathbold{\eta}:E\to\Rb^{s_0}$.
We define $M$ by induction on $t$. More specifically,
we inductively define the message and update functions of $M$ and verify that $\walkrefinementb{\mathbold{\eta}^{(t)}}{\ell}\equiv\mathbold{\eta}^{(t)}_M$ holds for all $t$ on any
given an input graph ${G = (V, E, \mathbold{\eta})}$ with $|V|=n$ and $\mathbold{\eta}:E\to\Rb^{s_0}$.

Clearly, by definition, $\walkrefinementb{\mathbold{\eta}^{(0)}}{\ell}=\mathbold{\eta}=\mathbold{\eta}^{(0)}_M$ so we can focus on $t>0$.
Assume that we have specified $M$ up to round $t-1$ such that $ \walkrefinementb{\mathbold{\eta}^{(t-1)}}{\ell}\equiv\mathbold{\eta}^{(t-1)}_M$ holds, where $\mathbold{\eta}^{(t-1)}_M:E\to\Rb^{s_{t-1}}$. We next consider round $t$. 

We use the injective function $u$ as described above. We will apply it to the setting whether $a=\ell s_{t-1}$ and $m=n^{\ell-1}$. More precisely, we consider the multi-index set $\{\mathbold{\alpha}\mid \mathbold{\alpha}\in [n^{\ell-1}]^{\ell s_{t-1}},|\mathbold{\alpha}|\leq n^{\ell-1}\}$ of cardinality $s_t={n^{\ell-1}+ \ell s_{t-1} \choose \ell s_{t-1}}$.  We denote the elements in this set by $\mathbold{\alpha}_s$ for $s\in[s_t]$. We define for $\mathbold{x}_1,\ldots,\mathbold{x}_\ell$ in $\Rb^{s_{t-1}}$, 
\begin{equation*}
\textsc{Msg}_M^{(t)}(\mathbold{x}_1,\ldots,\mathbold{x}_\ell):=\bigl((\mathbold{x}_1,\ldots,\mathbold{x}_\ell)^{\mathbold{\alpha}_s} \mid s\in[s_t]\bigr)\in\Rb^{s_t}. \label{eq:msg1}
\end{equation*}
When evaluated on an input graph $G=(V,E,\mathbold{\eta})$, for any $v,w\in V$:
\begin{align*}
\mathbold{m}_M^{(t)}(v,w)&=\sum_{v_1,\dots,v_{\ell-1} \in V}\textsc{Msg}^{(t)}\left(\mathbold{\eta}_M^{(t-1)}(v,v_1),\mathbold{\eta}^{(t-1)}_M(v_1,v_2),\ldots,\mathbold{\eta}^{(t-1)}_M(v_{\ell-1},w)\right)\\
&=\left(p_{\mathbold{\alpha}_s}(\mathbold{X}_{v,w}^{(t-1)})\mid s\in[s_t]\right)=:u^{(t)}(\mathbold{X}^{(t-1)}_{v,w})\in\Rb^{s_t},
\end{align*}
with $\mathbold{X}_{v,w}^{(t-1)}$ the $\Rb^{n^{\ell-1}\times \ell s_{t-1}}$ matrix whose rows are indexed by $(v_1,v_2,\ldots,v_{\ell-1})\in [n]^{\ell-1}$ and such that 
$$
(\mathbold{X}_{v,w}^{(t-1)})_{(v_1,v_2,\ldots,v_{\ell-1}),\bullet}:=\left(\mathbold{\eta}_M^{(t-1)}(v,v_1),\mathbold{\eta}^{(t-1)}_M(v_1,v_2),\ldots,\mathbold{\eta}^{(t-1)}_M(v_{\ell-1},w)\right).
$$
We have mentioned above that $u^{(t)}(\mathbold{X}_{v,w}^{(t-1)})=u^{(t)}(\mathbold{X}_{v',w'}^{(t-1)})$ if and only if 
\begin{multline*}
\bldbl \bigl(\mathbold{\eta}_M^{(t-1)}(v,v_1),\mathbold{\eta}^{(t-1)}_M(v_1,v_2),\ldots,\mathbold{\eta}^{(t-1)}_M(v_{\ell-1},w)\bigr) \brdbl_{v_1,v_2,\ldots,v_{\ell-1}\in [n]}=\\\bldbl \bigl(\mathbold{\eta}_M^{(t-1)}(v',v_1),\mathbold{\eta}^{(t-1)}_M(v_1,v_2),\ldots,\mathbold{\eta}^{(t-1)}_M(v_{\ell-1},w')\bigr) \brdbl_{v_1,v_2,\ldots,v_{\ell-1}\in [n]}.
\end{multline*}
From the induction hypothesis we know that $\walkrefinementb{\mathbold{\eta}^{(t-1)}}{\ell}\equiv \mathbold{\eta}_M^{(t-1)}$ and hence,
$\mathbold{m}_M^{(t)}({v,w})=\mathbold{m}_M^{(t)}(v',w')$ if and only if
 $u^{(t)}(\mathbold{X}_{v,w}^{(t-1)})=u^{(t)}(\mathbold{X}_{v',w'}^{(t-1)})$ if and only
if $\walkrefinementb{\mathbold{\eta}^{(t)}}{\ell}(v,w)=\walkrefinementb{\mathbold{\eta}^{(t)}}{\ell}(v',w')$.
It now suffices to define for any $\mathbold{x}\in\Rb^{s_{t-1}}$ and $\mathbold{y}\in\Rb^{s_{t}}$:
$$
\textsc{Upd}_M^{(t)}(\mathbold{x},\mathbold{y}):=\mathbold{y}\in\Rb^{s_t},
$$
such that when evaluated on the input graph, $$\mathbold{\eta}_M^{(t)}(v,w):=\textsc{Upd}_M^{(t)}(\mathbold{\eta}_M^{(t-1)}({v,w}),\mathbold{m}_M^{(t)}(v,w)):=\mathbold{m}_M^{(t)}(v,w)=u^{(t)}(\mathbold{X}^{(t-1)}_{v,w}).$$
Hence, $\mathbold{\eta}_M^{(t)}({v,w})=\mathbold{\eta}_M^{(t)}(v',w')$ if and only if $\walkrefinementb{\mathbold{\eta}^{(t)}}{\ell}(v,w)=\walkrefinementb{\mathbold{\eta}^{(t)}}{\ell}(v',w')$, as desired.
\end{proof}

We remark that the dimensions $s_t$ needed in each round grow very fast. For example, for $n=10$, $\ell=2$ and with initial $s_0$ set to $1$, we have $s_1={12\choose 2}=66$ 
and $s_2={142\choose 132}=664\,226\,242\,466\,073$. This is in sharp contrast to $s_t=1$ for all $t\geq 1$ in the countable case. It is an interesting open problem how to treat the real number case with constant (or small) $s_t$ dimensions. A preliminary investigation in dimensionality aspects of representations of multi-sets with elements from the reals is reported in~\citet{DBLP:conf/icml/WagstaffFEPO19} and~\citet{seo2019discriminative}.

\section{Simulation of $\walk{\ell}$ by graph neural networks}\label{sec:GNNs}
We next propose a couple of learnable  graph neural network (\GNN) models, each of which can be seen as walk \MPNNs, and match $\walk{\ell}$ in expressive power. More specifically, we first revisit the \GNNs proposed by \citet{grohewl} and show that they can simulate $\walk{\ell}$ on a specific input graph.
We next propose two \GNNs, one close in spirit to the \GINs of~\citet{xhlj19}, and one close
to the higher-order \GNNs of~\citet{DBLP:conf/nips/MaronBSL19}, and show that these can simulate $\walk{\ell}$ on all graphs of a fixed size $n$ provided that all labels combined form a finite set.
The latter restriction is to ensure that the approximations of functions by \MLPs 
are injective. This is needed since these approximations will substitute the hash functions used in $\walk{\ell}$).

\subsection{Simulating $\walk{\ell}$ on a single graph}
When dealing with a single graph, we can generalise the \GNN architecture from~\citet{grohewl}. As already mentioned in the introduction, \cite{grohewl} propose higher-order \GNNs that, in order to simulate $\wl{2}$, require maintaining
$\mathcal{O}(n^3)$ many labels. It would be desirable to need at most 
$\mathcal{O}(n^2)$ many dimensions. We achieve this by allowing non-linear layers in the \GNN architecture. We show the simulation for $\ell=2$, and thus $\wl{2}$, but the construction can be generalised easily to $\walk{\ell}$ for 
arbitrary $\ell\geq 2$. We comment on this generalisation later in this section.

Let $G=(V,E,\mathbold{\eta})$ be the  input graph  with $\mathbold{\eta}:E\to \Rb^{s_0}$ for some $s_0\in\Nb^+$. We represent $G$ by means of a tensor $\mathbf{A}\in \Rb^{n^2\times s_0}$ such that $\mathbf{A}_{ij\bullet}=\mathbold{\eta}(i,j)\in\Rb^{s_0}$ for each $i,j\in[n]$.

Similarly to~\citet{grohewl} we say that a tensor $\mathbf{A}\in\Rb^{n^2\times s}$ for $s\in\Nb^+$
is label-independent modulo equality\footnote{\citet{grohewl} use the notion of row-independence modulo equality because the row dimension correspond to the labels. Since we work with tensors, we use label-independence modulo equality. We always assume that the labels are stored in the last dimension in the tensors.} if the set of unique labels $\{ \mathbf{A}_{ij\bullet}\mid i,j\in[n]\}$ consists of linearly independent vectors in $\Rb^s$. We remark that the input tensor $\mathbf{A}$ can always be assumed to satisfy this property by hot-one encoding the labelling $\mathbold{\eta}$. In the worst case, we need labels in $\Rb^{n^2}$ to do so\footnote{This is really worst case as it corresponds to every pair in $[n]^2$ to have a distinct label.}. 
Generalising the \GNNs from~\cite{grohewl}, we
define $\mathbf{A}^{(0)}:=\mathbf{A}$ and for $t>0$ we define a tensor $\mathbf{A}^{(t)}\in\Rb^{n^2\times s_t}$ for some $s_t\in\Nb^+$ based on a tensor $\mathbf{A}^{(t-1)}\in\Rb^{n^2\times s_{t-1}}$. More specifically, for $i,j\in[n]$ and $s\in[s_t]$:
\begin{equation}
\mathbf{A}^{(t)}_{ijs}:=\mathsf{ReLU}\left( \sum_{k\in[n]}\sum_{c,d\in [s_{t-1}]}\mathbf{A}^{(t-1)}_{ikc}\mathbf{A}^{(t-1)}_{kjd}
\mathbf{W}^{(t)}_{cds} - q \mathbf{J}_{ijs}\right), \label{eq:2GNN}
\end{equation}
where $\mathbf{W}^{(t)}\in\Rb^{s_{t-1}\times s_{t-1}\times s_t}$ is a (learnable) weight matrix,
$\mathbf{J}\in\Rb^{n^2\times s_t}$ is a tensor consisting of all ones, and $q$ is (learnable) scalar in $\Rb$. 
A first observation is that
we can cast the update rules~(\ref{eq:2GNN}) as a $2$-walk \MPNN $M$. Indeed, it suffices to define for each $t>0$ and $\mathbf{a},\mathbf{b}\in\Rb^{s_{t-1}}$:
$$
\textsc{Msg}_M^{(t)}(\mathbf{a},\mathbf{b}):= \left(\sum_{c,d\in[s_{t-1}]}\mathbf{a}_c\mathbf{b}_d\mathbf{W}^{(t)}_{cds} - q\mid s\in[s_t]\right)\in\Rb^{s_t}
$$
and for $\mathbf{a}\in \Rb^{s_{t-1}}$ and $\mathbf{b}\in\Rb^{s_t}$:
$$
\textsc{Upd}_M^{(t)}(\mathbf{a},\mathbf{b}):=\mathsf{ReLU}(\mathbf{b}).
$$
Clearly, when using these message and update functions the corresponding $2$-walk \MPNN $M$ will compute the labelling
$\mathbold{\eta}^{(t)}_M(i,j):=\mathbf{A}_{ij\bullet}^{(t)}$ for $i,j\in [n]$ and each $t\geq 0$. Proposition~\ref{prop:upperbw} thus implies that the 
expressive power of architectures of the form~(\ref{eq:2GNN}) are bounded by $\wl{2}$. We next provide a matching lower bound.

\begin{proposition}
For a given input graph $G=(V,E,\mathbold{\eta})$ with $\mathbold{\eta}:E\to\Rb^{s_0}$ and such that 
the labels are label-independent modulo equality,
 there exists a scalar $q\in\Rb$ and, for each $t>0$, there exists a weight tensor $\mathbf{W}^{(t)}$ such $\mathbold{\eta}_{\wl{2}}^{(t)}\equiv \mathbf{A}^{(t)}$ holds, where $\mathbf{A}^{(t)}$ is defined as in ~(\ref{eq:2GNN}). Furthermore, $\mathbf{A}^{(t)}$ is label-independent modulo equality.
\end{proposition}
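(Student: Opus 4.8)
The plan is to prove this by induction on $t$, mirroring the simulation argument in Proposition~\ref{prop:lowerb} and Proposition~\ref{prop:lowerbreal} but now working within the rigid linear-algebraic format of~(\ref{eq:2GNN}). The base case is immediate: $\mathbf{A}^{(0)}=\mathbf{A}$ represents $\mathbold{\eta}=\mathbold{\eta}_{\wl{2}}^{(0)}$ and is label-independent modulo equality by hypothesis. For the inductive step, assume $\mathbf{A}^{(t-1)}$ represents $\mathbold{\eta}_{\wl{2}}^{(t-1)}$ (i.e.\ $\mathbold{\eta}_{\wl{2}}^{(t-1)}(i,j)=\mathbold{\eta}_{\wl{2}}^{(t-1)}(i',j')\iff\mathbf{A}^{(t-1)}_{ij\bullet}=\mathbf{A}^{(t-1)}_{i'j'\bullet}$) and that the distinct vectors $\{\mathbf{A}^{(t-1)}_{ij\bullet}\}$ are linearly independent in $\Rb^{s_{t-1}}$. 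I then need to choose $q$ and $\mathbf{W}^{(t)}$ so that the pre-ReLU expression $\sum_{k}\sum_{c,d}\mathbf{A}^{(t-1)}_{ikc}\mathbf{A}^{(t-1)}_{kjd}\mathbf{W}^{(t)}_{cds}-q$ captures exactly the $\wl{2}$ refinement, and so that $\mathbf{A}^{(t)}$ is again label-independent modulo equality.

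The key construction is as follows. Write $L_{t-1}$ for the (finite) set of distinct labels occurring in $\mathbf{A}^{(t-1)}$; index them $\mathbf{v}_1,\dots,\mathbf{v}_p\in\Rb^{s_{t-1}}$. For the multiset $S_{ij}=\bldbl(\mathbf{A}^{(t-1)}_{ik\bullet},\mathbf{A}^{(t-1)}_{kj\bullet})\mid k\in[n]\brdbl$, the $\wl{2}$ label of $(i,j)$ at round $t$ is determined by the pair $(\mathbf{A}^{(t-1)}_{ij\bullet},S_{ij})$; under the loops-treated-differently convention recalled in the preliminaries it is determined by $S_{ij}$ alone. The quantity $\sum_k\mathbf{A}^{(t-1)}_{ikc}\mathbf{A}^{(t-1)}_{kjd}$ is a degree-two moment of this multiset; because the $\mathbf{v}_r$ are linearly independent, the collection of these moments over all $c,d\in[s_{t-1}]$ — equivalently the values $\sum_k \langle \mathbf{e}_c,\mathbf{A}^{(t-1)}_{ik\bullet}\rangle\langle\mathbf{e}_d,\mathbf{A}^{(t-1)}_{kj\bullet}\rangle$ — already determine the multiset of pairs $(\mathbf{A}^{(t-1)}_{ik\bullet},\mathbf{A}^{(t-1)}_{kj\bullet})$ up to equality (this is the point where linear independence is essential: distinct labels are separated by coordinate functionals, so the count of each pair $(\mathbf{v}_r,\mathbf{v}_s)$ is recoverable from the $s_{t-1}^2$ bilinear sums). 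So I set $s_t$ to be the number of distinct $\wl{2}$-classes at round $t$ and choose $\mathbf{W}^{(t)}\in\Rb^{s_{t-1}\times s_{t-1}\times s_t}$ so that, for each class $s$, the slice $\mathbf{W}^{(t)}_{\bullet\bullet s}$ is a linear functional of the moment vector $(\sum_k\mathbf{A}^{(t-1)}_{ikc}\mathbf{A}^{(t-1)}_{kjd})_{c,d}$ that takes the value (say) $q+1$ on pairs $(i,j)$ in class $s$ and value $\le q$ on all other classes; then $\mathsf{ReLU}(\cdot - q\mathbf{J})$ yields the standard basis vector $\mathbf{e}_s$ for $(i,j)$ in class $s$. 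A fixed threshold $q$ works for all $t$ by scaling the $\mathbf{W}^{(t)}$ appropriately (or one takes $q=0$ after a translation). Then $\mathbf{A}^{(t)}_{ij\bullet}=\mathbf{e}_{s(i,j)}$, which manifestly makes $\mathbf{A}^{(t)}$ represent $\mathbold{\eta}_{\wl{2}}^{(t)}$ and the distinct labels are standard basis vectors, hence linearly independent — closing the induction.

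The main obstacle is the separation step: exhibiting, for each $\wl{2}$-class $s$, a single linear functional on the $s_{t-1}^2$-dimensional moment space that strictly separates class $s$ from the finitely many other classes. One has to argue that no two distinct classes produce the same moment vector (so that separation is even possible), and then invoke a genericity/interpolation argument — the classes give finitely many distinct points in $\Rb^{s_{t-1}^2}$, and a generic affine functional separates any one point from the rest; concretely one can take the functional dual to the difference direction or use a Vandermonde-type construction. I also need to double-check the bookkeeping that goes from ``the pre-ReLU slice equals $q+1$ exactly on class $s$ and is $\le q$ elsewhere'' to ``$\mathbf{A}^{(t)}_{ij\bullet}$ is exactly $\mathbf{e}_{s(i,j)}$'' — in particular that the $-q\mathbf{J}$ shift plus $\mathsf{ReLU}$ zeroes out every non-matching coordinate, which forces the separation to be one-sided (strictly above $q$ on the class, at or below $q$ off it) rather than merely two-sided. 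Everything else — the base case, the reduction to $S_{ij}$ via the loop convention, and the fact that basis vectors are linearly independent — is routine and parallels the earlier propositions.
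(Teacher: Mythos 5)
Your overall strategy (induction, using label-independence to recover the counts $N(i,j,c,d)$ from the bilinear sums $\sum_k \mathbf{A}^{(t-1)}_{ikc}\mathbf{A}^{(t-1)}_{kjd}$ via a left inverse of the matrix of distinct labels) matches the paper up to the point where the output labels are produced, but the final separation step has a genuine gap. You want, for each $\wl{2}$-class $s$, a linear functional $L_s$ of the moment vector and a common constant $q$ with $L_s = q+1$ on class $s$ and $L_s \le q$ on every other class, so that $\mathsf{ReLU}(\cdot - q)$ yields the standard basis vector $\mathbf{e}_s$. Such one-versus-rest affine separation of a point $p_s$ from finitely many other points exists only when $p_s$ is an extreme point of the convex hull of all the moment vectors; if $p_s$ is a convex combination of the others (e.g.\ the count matrix of one class is the average of the count matrices of two other classes, which nothing in the setup rules out, since the count vectors are merely distinct lattice points in a simplex), then every affine functional satisfies $L(p_s) \le \max_{r\ne s} L(p_r)$ and your requirement is unsatisfiable. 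The ``generic affine functional'' argument you invoke only guarantees pairwise distinct values, not domination of one class over all others, and since a layer of the form~(\ref{eq:2GNN}) offers exactly one linear map followed by one $\mathsf{ReLU}$, you cannot repair this with further processing inside the round. Your own caveat about the separation having to be ``one-sided'' is precisely where the construction breaks.

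The paper avoids this by not insisting on one-hot outputs. It first collapses the count tensor $\mathbf{C}^{(t)}_{ij\bullet\bullet}$ injectively to a single positive scalar $\mathbf{E}^{(t)}_{ij}$ by two successive base-$(\mathsf{max}+1)$ encodings (both linear maps, hence absorbable into $\mathbf{W}^{(t)}$), and then sets $\mathbf{F}^{(t)}_{ijs} = \mathbf{E}^{(t)}_{ij}/e_s$ where $e_1 > \cdots > e_{s_t}$ are the distinct values of $\mathbf{E}^{(t)}$. Taking $q$ to be the largest entry of $\mathbf{F}^{(t)}$ below $1$, the vector $\mathsf{ReLU}(\mathbf{F}^{(t)}_{ij\bullet} - q)$ equals $1-q$ at the position of the class of $(i,j)$, is strictly positive at positions of smaller $e_s$, and is $0$ at positions of larger $e_s$. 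The resulting distinct labels form a triangular, hence linearly independent, family --- which is all the induction needs --- even though they are not basis vectors. The essential idea you are missing is this reduction to a one-dimensional injective invariant followed by ratio-and-threshold comparisons, which replaces the impossible convex-separation requirement by a trivial one-dimensional threshold at $1$.
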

\begin{proof}
The base case $t=0$ is satisfied by assumption. We next assume that the induction hypothesis is satisfied for $t-1$ and consider round $t$. We define $\mathbf{W}^{(t)}$ as the product of a number of tensors, which we define next.
In a similar way as in~\cite{grohewl} we first map each label in $\mathbf{A}^{(t-1)}$ to a canonical vector encoding that label. Here, with a canonical vector we mean a binary vector with precisely one occurrence of the value $1$. Intuitively,
the canonical vector in which $1$ appears in position $i$ corresponds to the $i$th label, relative to some ordering on the labels. By the induction hypothesis, $\mathbf{A}^{(t-1)}$ is label-independent modulo equality. So, if we assume that there are $c_{t}$ distinct labels $\mathbf{a}_1,\ldots,\mathbf{a}_{c_t}$ in 
$\mathbf{A}^{(t-1)}$, then we know that these vectors in $\Rb^{s_{t-1}}$ are linearly independent. We denote by $\mathsf{uniq}(\mathbf{A}^{(t-1)})$ the $c_t\times s_{t-1}$
matrix consisting of these distinct labels. Due to linear independence, there exists a $s_{t-1}\times c_t$ matrix 
$\mathbf{V}^{(t)}$ such that $\mathsf{uniq}(\mathbf{A}^{(t-1)})\mathbf{V}^{(t)}=\mathbf{Id}\in\Rb^{c_t\times c_t}$. As a first step, we multiply each label in $\mathbf{A}^{(t-1)}$ with $\mathbf{V}^{(t)}$. More specifically, we define a tensor
$\mathbf{B}\in\Rb^{n^2\times c_t}$ such that for $i,j\in[n]$ and $c\in [c_t]$:
$$
\mathbf{B}^{(t)}_{ijc}:= \sum_{s'\in [s_{t-1}]}\mathbf{A}^{(t-1)}_{ijs'}\mathbf{V}^{(t)}_{s'c}.
$$
In other words, for $i,j\in[n]$ and $c\in [c_t]$:
$$
\mathbf{B}^{(t)}_{ijc}=\begin{cases}
1 & \text{if $(\mathbf{A}^{(t-1)})_{ij\bullet}=\mathbf{a}_{c}$}\\
0 &\text{otherwise}.
\end{cases}
$$
We next define a tensor $\mathbf{C}^{(t)}\in\Rb^{n^2\times c_t\times c_t}$ such that for $i,j\in[n]$, $c,d\in [c_t]$: $$
\mathbf{C}^{(t)}_{ijcd}:=\sum_{k\in[n]}
\mathbf{B}^{(t)}_{ikc}
\mathbf{B}^{(t)}_{kjd}.
$$
As a consequence,  for $i,j\in[n]$, $c,d\in [c_t]$:
$$
\mathbf{C}^{(t)}_{ijcd}=N(i,j,c,d):=\text{number of $k\in[n]$ such that $\mathbf{A}^{(t-1)}_{ik\bullet}=\mathbf{a}_{c}$ and
$\mathbf{A}^{(t-1)}_{kj\bullet}=\mathbf{a}_{d}$}.
$$
By the induction hypothesis, $\mathbold{\eta}_{\wl{2}}^{(t-1)}\equiv\mathbf{A}^{(t-1)}$, and we may assume that there is a bijection between the unique labels in $\mathbf{A}^{(t-1)}$ and those assigned by $\mathbold{\eta}_{\wl{2}}^{(t-1)}$. Let us assume that this bijection is such that for $c\in[c_t]$, $\mathbf{a}_{c}$
corresponds to the label $\ell_c$. We remark that $
\mathbf{C}^{(t)}_{ij\bullet\bullet}=\mathbf{C}^{(t)}_{i'j'\bullet\bullet}\in \Rb^{c_t\times c_t}$ if and only if 
for each pair of labels $\mathbf{a}_{c}$ and $\mathbf{a}_{d}$ in $\mathbf{A}^{(t-1)}$, 
$$
N(i,j,c,d)=N(i',j',c,d).
$$
By the induction hypothesis, this in turn is equivalent to
$$
|\{ k\in[n]\mid \mathbold{\eta}_{\wl{2}}^{(t-1)}(i,k)=\ell_{c},\mathbold{\eta}_{\wl{2}}^{(t-1)}(k,j)=\ell_{d}\}|
=
|\{ k\in[n]\mid \mathbold{\eta}_{\wl{2}}^{(t-1)}(i',k)=\ell_{c},\mathbold{\eta}_{\wl{2}}^{(t-1)}(k,j')=\ell_{d}\}|,
$$
for every $c,d\in[c_t]$.
Since this holds for any pair of labels $\ell_{c}$ and $\ell_{d}$, this is equivalent to $\mathbold{\eta}_{\wl{2}}^{(t)}(i,j)=\mathbold{\eta}_{\wl{2}}^{(t)}(i',j')$. So, at this point we already know that 
$\mathbold{\eta}_{\wl{2}}^{(t)}\equiv\mathbf{C}^{(t)}$. In what follows, we turn 
$\mathbf{C}^{(t)}$ into a tensor in $\Rb^{n^2\times s_t}$ which is label-independent modulo equality whilst preserving equivalence to 
$\mathbold{\eta}_{\wl{2}}^{(t)}$.

The first thing we do is to turn each of the labels in $\mathbf{C}^{(t)}_{ij\bullet\bullet}$ into a single number in $\Nb^+$. Similarly as in~\cite{grohewl} we identify the maximum entry $\mathsf{max}$ in $\mathbf{C}^{(t)}$ and define the vector $\mathbf{M}^{(t)}\in\Rb^{c_t}$ such that for 
$d\in[c_t]$:
$$
\mathbf{M}^{(t)}_{d}:=(\mathsf{max}+1)^{d-1}.$$
We next define the tensor $\mathbf{D}^{(t)}\in\Rb^{n^2\times c_t}$ such that for $i,j\in[n]$ and $c\in[c_t]$:
$$
\mathbf{D}^{(t)}_{ijc} := \sum_{d\in[c_t]}\mathbf{C}^{(t)}_{ijcd}\mathbf{M}^{(t)}_{d}.
$$
In other words,
$$\mathbf{D}^{(t)}_{ijc}= \sum_{d\in[c_t]} N(i,j,c,d) (\mathsf{max}+1)^{d-1}
$$
and we thus  have represented each vector $\mathbf{C}^{(t)}_{ijc\bullet}$ by its $(\mathsf{max}+1)$-ary representation.
Since all $N(i,j,c,d)\leq \mathsf{max}$, we have that 
$\mathbf{C}^{(t)}_{ijc\bullet}=\mathbf{C}^{(t)}_{i'j'c\bullet}$ if and only if 
$\mathbf{D}^{(t)}_{ijc}=\mathbf{D}^{(t)}_{i'j'c}$ and thus $\mathbf{C}^{(t)}_{ij\bullet\bullet}=\mathbf{C}^{(t)}_{i'j'\bullet\bullet}$ if and only if 
$\mathbf{D}^{(t)}_{ijc\bullet}=\mathbf{D}^{(t)}_{i'j'\bullet}$. As a consequence, also $\mathbold{\eta}_{\wl{2}}^{(t)}\equiv\mathbf{D}^{(t)}$ holds. We perform the same reduction once more, but this time using the maximum entry $\mathsf{max}'$ in $\mathbf{D}^{(t)}$. That is, we define 
$\mathbf{N}^{(t)}$ just like $\mathbf{M}^{(t)}$ but using $\mathsf{max}'$ instead of $\mathsf{max}$ and consider the matrix
$\mathbf{E}^{(t)}\in \Rb^{n\times n}$ such that for $i,j\in[n]$:
$$
\mathbf{E}^{(t)}_{ij}:=\sum_{c\in[c_t]}\mathbf{D}^{(t)}_{ijc}\mathbf{N}^{(t)}_{c}.
$$
A similar argument as before shows that $\mathbold{\eta}_{\wl{2}}^{(t)}\equiv\mathbf{E}^{(t)}$. Let $e_1> e_2> \cdots > e_{s_t}$ be the unique values in $\mathbf{E}^{(t)}$. By construction of $\mathbf{E}^{(t)}$, each $e_s>0$. We next consider the
vector $\mathbf{U}^{(t)}\in\Rb^{s_t}$ define as $\mathbf{U}^{(t)}_{s}=\frac{1}{e_s}$ for $s\in[s_t]$. We define the tensor
$\mathbf{F}^{(t)}\in\Rb^{n^2\times s_t}$ such that
for $i,j\in[n]$ and $s\in [s_t]$:
$$
\mathbf{F}^{(t)}_{ijs}:=\mathbf{E}^{(t)}_{ij}\mathbf{U}^{(t)}_{s}.
$$
In other words, for $i,j\in[n]$ and $s\in[s_t]$:
$$
\mathbf{F}^{(t)}_{ijs}= \frac{\mathbf{E}^{(t)}_{ij}}{e_s}.
$$
We are now ready to define $\mathbf{W}^{(t)}\in\Rb^{s_{t-1}\times s_{t-1}\times s_t}$. Indeed, for 
$c,d\in[s_{t-1}]$ and $s\in[s_t]$:
$$\mathbf{W}^{(t)}_{cds}:= \sum_{c',d'\in[c_t]}\mathbf{V}^{(t)}_{cc'}\mathbf{V}^{(t)}_{dd'}\mathbf{M}^{(t)}_{d'}\mathbf{N}^{(t)}_{c'}\mathbf{U}^{(t)}_{s}.$$
Hence, we can write $\mathbf{F}^{(t)}$ as 
$$
\mathbf{F}^{(t)}_{ijs}=\sum_{k\in[n]}\sum_{c,d\in[s_{t-1}]}
\mathbf{A}^{(t-1)}_{ikc}
\mathbf{A}^{(t-1)}_{kjd}\mathbf{W}^{(t)}_{cds}.
$$
It remains to identify a scalar $q$ such that 
$$
\mathbf{A}^{(t)}_{ijs}=\mathsf{ReLU}\left( \sum_{k\in[n]}\sum_{c,d\in [s_{t-1}]}\mathbf{A}^{(t-1)}_{ikc}\mathbf{A}^{(t-1)}_{kjd}
\mathbf{W}^{(t)}_{cds} - q \mathbf{J}_{ijs}\right)
$$
is label-independent modulo equality.
To this aim, let $q^{(t)}$ be the greatest value in $\mathbf{F}^{(t)}$ smaller than $1$ and consider
the tensor $\mathbf{G}^{(t)}\in\Rb^{n^2\times s_t}$ such that for $i,j\in[n]$ and $s\in[s_t]$,
$\mathbf{G}^{(t)}_{ijs}:=\mathbf{F}^{(t)}_{ijs}-q^{(t)}$. Hence,
$$
\mathbf{G}^{(t)}_{ijs}=\begin{cases}
1-q^{(t)} &\text{if $\mathbf{E}^{(t)}_{ij}=e_s$}\\
> 0 &\text{if $\mathbf{E}^{(t)}_{ij}>e_s$}\\
\leq 0 &\text{if $\mathbf{E}^{(t)}_{ij}< e_s$}.
\end{cases}
$$
Hence, for $i,j\in[n]$, $s\in[s_t]$:
$$
\mathbf{A}^{(t)}_{ijs}:=\mathsf{ReLU}\left(\mathbf{G}^{(t)}_{ijs}\right)= \begin{cases}
1-q^{(t)} &\text{if $\mathbf{E}^{(t)}_{ij}=e_s$}\\
> 0 &\text{if $\mathbf{E}^{(t)}_{ij}>e_s$}\\
0 &\text{if $\mathbf{E}^{(t)}_{ij}< e_s$}.
\end{cases}
$$
It is again easily verified that $\mathbold{\eta}_{\wl{2}}^{(t)}\equiv\mathbf{A}^{(t)}$.
Indeed, $\mathbold{\eta}_{\wl{2}}^{(t)}\sqsubseteq\mathbf{A}^{(t)}$ follows immediately from  $\mathbold{\eta}_{\wl{2}}^{(t)}\equiv\mathbf{E}^{(t)}$. To show  $\mathbf{A}^{(t)}\sqsubseteq\mathbold{\eta}_{\wl{2}}^{(t)}$ it suffices to observe that  when $\mathbf{A}^{(t)}_{ij\bullet}=\mathbf{A}^{(t)}_{i'j'\bullet}$ holds, these vectors contain $1-q^{(t)}$
at the same (unique) position, say at position $s\in[s_t]$. Hence, $\mathbf{E}_{ij}^{(t)}=e_s=\mathbf{E}_{i'j'}^{(t)}$ and again due to $\mathbold{\eta}_{\wl{2}}^{(t)}\equiv\mathbf{E}^{(t)}$, $\mathbold{\eta}_{\wl{2}}^{(t)}(i,j)=\mathbold{\eta}_{\wl{2}}^{(t)}(i',j')$.

The unique labels in $\mathbf{A}^{(t)}$ are also linearly independent. To see this, we note that the unique labels correspond to the unique elements $e_1,e_2,\ldots,e_{s_t}$ in $\mathbf{E}^{(t)}$. As a consequence, the value $e_s$ corresponds to the label
$$
(0,\ldots, 0, \underbrace{1-q^{(t)}}_{\text{position $s$}}, >0 , \ldots, >0)
$$
in $\mathbf{A}^{(t)}$.
In other words, these form (up to a permutation) an upper-triangular matrix with $1-q^{(t)}\neq 0$ on its diagonal, and this is known to be a non-singular matrix. As a consequence, the unique labels in $\mathbf{A}^{(t)}$ are linearly independent.

We further observe that $q^{(t)}$ can be chosen to be any number satisfying
$\frac{{n}^{{n^2}^{n^2}}-1}{{n}^{{n^2}^{n^2}}}<q<1$.
This follows from upper bounding $\mathsf{max}$ by $n$, and $c_t$ by $n^2$ which results in
 an upper bound for $\max'$ as $n^{n^2}$. Hence, $\frac{{n}^{{n^2}^{n^2}}-1}{{n}^{{n^2}^{n^2}}}$ is an upper bound on the largest value in $\mathbf{F}^{(t)}$ smaller than $1$ for any $t>0$. As a consequence, $q^{(t)}$ can be chosen uniformly across al layers. All combined, this shows that architectures of the form of~(\ref{eq:2GNN}) can simulate $\wl{2}$ on $G=(V,E,\mathbold{\eta})$.
\end{proof}

\begin{remark}
To generalise the construction to simulate $\walk{\ell}$ for $\ell>2$ it suffices to consider $\ell-1$ matrix multiplications of $\mathbf{A}^{(t-1)}$ in the architecture~(\ref{eq:2GNN}) and to extend the weight tensor to be of dimensions $\ell s_{t-1}\times s_t$. The construction of $\mathbf{W}^{(t)}$ is entirely similar, with the exception that the matrix $\mathbf{E}^{(t)}$, which will now be in $\Rb^{n^\ell}$, is obtained by encoding each of its $\ell$ dimensions as a number in $\Nb$. So instead of only two matrices $\mathbf{M}^{(t)}$ and $\mathbf{N}^{(t)}$, we need $\ell$ such matrices. Finally,
$q$ is lower bounded by $\left(n\underbrace{{}^{{{n^2}^{\cdots}}^{n^2}}}_{\text{$\ell$ times}}-1\right)/n\underbrace{{}^{{{n^2}^{\cdots}}^{n^2}}}_{\text{$\ell$ times}}$. \qed
\end{remark}

We note that we can use \GNNs of the form~(\ref{eq:2GNN}) to distinguish graphs by simply running the \GNN on the direct sum of the two graphs, just as for $\wl{1}$.

\subsection{Simulating $\walk{\ell}$ on a collection of graphs}
We have seen two different ways of simulating $\walk{\ell}$ by $\ell$-walk \MPNNs in Section~\ref{subsec:lowerb}. We next turn these simulations into learnable \GNNs by replacing the message functions by multi layer perceptrons (\MLPs), just as in
 \citep{xhlj19} and~\citep{DBLP:conf/nips/MaronBSL19}. \MLPs are known to approximate any continuous bounded function. In order to approximate the message functions by \MLPs we need to
ensure that the message functions are continuous and that the approximations returned by
the \MLPs inherit the crucial injectivity properties (on multisets) of the functions being approximated.

Let us first consider the simulation presented in Section~\ref{subsec:countable}. In that simulation we used an arbitrary pairing function $\tau:\Nb^{\ell}\to \Nb$ and defined 
$\textsc{Msg}^{(t)}(a_1,\ldots,a_\ell):=(n^{(\ell-1)}+1)^{\tau(a_1,\ldots,a_\ell)}$. To ensure
continuity we choose  $\tau:\Nb^{\ell}\to \Nb: (a_1,a_2,\ldots,a_\ell)\mapsto p_1^{a_1}p_2^{a_2}\cdots p_{\ell}^{a_\ell}$ with $p_i$ the $i$th prime number. Clearly, its extension $\tau:\Rb^{\ell}\to \Rb:(x_1,\ldots,x_\ell)\mapsto 2^{x_1}3^{x_2}\cdots p_{\ell}^{x_\ell}$ is a continuous function and similarly, $h:\Rb^{\ell}\to\Rb:(x_1,\ldots,x_\ell)\to (n^{(\ell-1)}+1)^{\tau(x_1,\ldots,x_\ell)}$ is continuous.
We  remark that other continuous pairing functions $\Nb^\ell\to\Nb$ can be used instead.

There are now various ways of using \MLPs to approximate $h$ and $\tau$. We recall that the simulation in 
Section~\ref{subsec:countable} concerns graphs $G=(V,E,\mathbold{\eta})$ with $\mathbold{\eta}:E\to\Nb\subseteq\Rb$. Hence, we can represent $G$ by means of a matrix $\mathbf{A}^{(0)}$ such that $(\mathbf{A}^{(0)})_{ij}:=\mathbold{\eta}(i,j)$ for all $i,j\in[n]$. We then define
for $t>0$, the matrix $\mathbf{A}^{(t)}\in \Rb^{n\times n}$, as follows:
\begin{equation}
(\mathbf{A}^{(t)})_{ij}:= \sum_{i_1,\ldots,i_\ell\in[n]} \mathsf{MLP}_{\mathbold{\theta}^{(t)}}\bigl((\mathbf{A}^{(t-1)})_{ii_1},(\mathbf{A}^{(t-1)})_{i_1i_2},\ldots,(\mathbf{A}^{(t-1)})_{i_{\ell-1}j}\bigr),\label{eq:GIN1}
\end{equation}
where $\mathsf{MLP}_{\mathbold{\theta}^{(t)}}:\Rb^\ell\to \Rb$ is an \MLP
with parameters $\mathbold{\theta}^{(t)}$. The \MLP is to be trained to approximate $h$, just as for  \GINs~\citep{xhlj19}.
Alternatively, we can define $\mathbf{A}^{(t)}\in \Rb^{n\times n}$, as follows:
\begin{equation}
\mathbold{A}^{(t)}_{ij}:=\sum_{i_1,\ldots,i_{\ell-1}\in [n]} \mathsf{MLP}_{\mathbold{\theta}^{(t)}}\left(\mathsf{MLP}_{\mathbold{\theta}^{(t)}_1}(\mathbold{A}^{(t-1)})_{ii_1}\cdot \mathsf{MLP}_{\mathbold{\theta}^{(t)}_2}(\mathbold{A}^{(t-1)})_{i_1i_2}\cdots
\mathsf{MLP}_{\mathbold{\theta}^{(t)}_\ell}(\mathbold{A}^{(t-1)})_{i_{\ell-1}j}\right),\label{eq:GIN2} \end{equation}
where $\mathsf{MLP}_{\mathbold{\theta}^{(t)}}:\Rb\to \Rb$  is an \MLP
with parameters $\mathbold{\theta}^{(t)}$ used to approximate the function $x\to (n^{(\ell-1)}+1)^x$ and $\mathsf{MLP}_{\mathbold{\theta}^{(t)}_i}:\Rb\to \Rb$, for $i\in[\ell]$, is an \MLP
with parameters $\mathbold{\theta}^{(t)}_i$ used to approximate the function $x\to p_i^x$ with $p_i$ the $i$th prime number.
Yet another alternative could be to encode $G$ as the tensor $\mathbf{A}^{(0)}\in\Rb^{n\times n\times \ell}$ with $(\mathbf{A}^{(0)})_{ijs}:=p_s^{\mathbold{\eta}(i,j)}$ with $p_s$ the $s$th prime number, for $s\in[\ell]$, and then define for $t>0$, the tensor 
$\mathbf{A}^{(t)}\in\Rb^{n\times n\times \ell}$ with for $i,j\in[n]$ and $s\in[\ell]$:
\begin{equation}
\mathbold{A}^{(t)}_{ijs}:=\mathsf{MLP}_{\theta^{(t)}_1}\left(
\sum_{i_1,\ldots,i_{\ell-1}\in [n]} \mathsf{MLP}_{\theta^{(t)}_2}\left((\mathbold{A}^{(t-1)})_{ii_11}\cdot (\mathbold{A}^{(t-1)})_{i_1i_22}\cdots
(\mathbold{A}^{(t-1)})_{i_{\ell-1}j\ell}\right)\right),\label{eq:GIN3}
\end{equation}
where $\mathsf{MLP}_{\mathbold{\theta}^{(t)}_1}:\Rb\to \Rb^\ell$  is an \MLP
with parameters $\mathbold{\theta}^{(t)}_1$ used to approximate the function $x\to (2^x,3^x,\ldots,p_\ell^x)$
and $\mathsf{MLP}_{\mathbold{\theta}^{(t)}_2}:\Rb\to \Rb$ is an \MLP
with parameters $\mathbold{\theta}^{(t)}_2$ used to approximate the function $x\to (n^{(\ell-1)}+1)^x$.

In all three formulations the \MLPs have to be learned based on the available labels. In general, one could approximate the functions up to arbitrary precision provided that the set of labels belong to some compact set. We also need, however, to ensure injectivity. One way to guarantee this is by assuming that only a finite number of labels are present in the collection of graphs~\citep{DBLP:conf/nips/MaronBSL19,Sato2020ASO}.
We thus can guarantee the following.
\begin{proposition}
For each $n,\ell,t\in\Nb$ with $\ell\geq 2$, there exists parameters of the \MLPs in~(\ref{eq:GIN1}),~(\ref{eq:GIN2}) and~(\ref{eq:GIN3}), such that $\mathbf{A}^{(t)}\equiv\mathbold{\eta}_{\walk{\ell}}^{(t)}$ for any graph $G=(V,E,\mathbold{\eta})$ with $|V|=n$, $\mathbold{\eta}:E\to \Gamma\subseteq \Nb$, where
$\Gamma$ is a finite set of numbers.
\end{proposition}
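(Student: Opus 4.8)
The plan is to bootstrap the exact simulation of Proposition~\ref{prop:lowerb} and replace its message functions by \MLPs, following the \GIN-style argument of~\citet{xhlj19}; the one genuine subtlety is that injectivity must be preserved across the composition of all $t$ rounds. I would argue by induction on the round $r\le t$, maintaining the invariant that, once the \MLP parameters of rounds $1,\ldots,r$ have been fixed, $\mathbf{A}^{(r)}\equiv\mathbold{\eta}_{\walk{\ell}}^{(r)}$ holds on every graph $G=(V,E,\mathbold{\eta})$ with $|V|=n$ and $\mathbold{\eta}:E\to\Gamma$. The base case $r=0$ is immediate since $\mathbf{A}^{(0)}=\mathbold{\eta}$.

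For the inductive step, note first that there are only finitely many such graphs $G$ (because $\Gamma$ and $n$ are finite), hence the set $D$ of all real numbers occurring as an entry $(\mathbf{A}^{(r-1)})_{ij}$, over all these graphs and all $i,j\in[n]$, is finite. Relabel $D$ by an injection $\iota\colon D\to\Nb$ and extend $\iota$ to a continuous function $\Rb\to\Rb$ (e.g.\ by piecewise-linear interpolation). Then $\hat h(x_1,\ldots,x_\ell):=h\bigl(\iota(x_1),\ldots,\iota(x_\ell)\bigr)$, with $h$ the (continuous) map from the proof of Proposition~\ref{prop:lowerb}, is continuous, and since $\iota$ sends entries of $\mathbf{A}^{(r-1)}$ to integers, the exact aggregate $\sum_{i_1,\ldots,i_{\ell-1}\in[n]}\hat h\bigl((\mathbf{A}^{(r-1)})_{ii_1},\ldots,(\mathbf{A}^{(r-1)})_{i_{\ell-1}j}\bigr)$ equals $\varphi$ applied to the multiset $\ldbl((\mathbf{A}^{(r-1)})_{ii_1},\ldots,(\mathbf{A}^{(r-1)})_{i_{\ell-1}j})\rdbl_{i_1,\ldots,i_{\ell-1}\in[n]}$, exactly as in that proof; by the induction hypothesis this multiset is a faithful function of $\mathbold{\eta}_{\walk{\ell}}^{(r-1)}$, hence determines $\mathbold{\eta}_{\walk{\ell}}^{(r)}(i,j)$ injectively.

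The decisive quantitative point is now the following. Over all (finitely many) graphs and pairs $(i,j)$ these exact aggregates take finitely many values, so there is a gap $\gamma>0$ with any two of them either equal or at distance $\ge\gamma$. Since $D^\ell$ lies in a compact box, universal approximation yields parameters $\mathbold{\theta}^{(r)}$ with $\bigl|\mathsf{MLP}_{\mathbold{\theta}^{(r)}}(\mathbold{x})-\hat h(\mathbold{x})\bigr|<\gamma/(3n^{\ell-1})$ on that box. Because an \MLP is a fixed deterministic map, equal inputs give equal outputs, so equal multisets yield equal values of $(\mathbf{A}^{(r)})_{ij}$; and for unequal multisets the exact aggregates differ by $\ge\gamma$ while the approximation perturbs each aggregate (a sum of $n^{\ell-1}$ terms) by less than $\gamma/3$, so the computed values still differ. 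Hence $\mathbf{A}^{(r)}\equiv\mathbold{\eta}_{\walk{\ell}}^{(r)}$, which closes the induction for~(\ref{eq:GIN1}). For~(\ref{eq:GIN2}) and~(\ref{eq:GIN3}) the scheme is identical, the only change being that $\hat h$ is realised as a composition/product of several continuous maps ($x\mapsto p_i^x$, $x\mapsto(n^{(\ell-1)}+1)^x$, $x\mapsto(2^x,\ldots,p_\ell^x)$), each bounded and uniformly continuous on the relevant compact sets, so approximating each constituent \MLP finely enough keeps the total error below $\gamma/3$ and the same argument applies.

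The step I expect to be the main obstacle is exactly this propagation of injectivity through $t$ rounds: a single \MLP layer realises only an \emph{approximate} injection, so one cannot invoke the clean combinatorial argument of Proposition~\ref{prop:lowerb} directly, and one must instead (i) arrange the induction so that the finite set of layer-$r$ inputs is pinned down \emph{before} $\mathbold{\theta}^{(r)}$ is chosen, and (ii) drive the approximation error below the round-dependent separation gap $\gamma$ so that no spurious collisions or splits are introduced. Finiteness of $\Gamma$ is precisely what makes both (i) and (ii) work — it is what guarantees compact input domains and a strictly positive gap $\gamma$.
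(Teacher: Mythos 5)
Your proposal is correct and follows essentially the same route the paper takes: the paper justifies this proposition only by the preceding paragraph (approximate the continuous functions $h$ and $\tau$ by \MLPs on a compact domain, and use finiteness of $\Gamma$ to retain injectivity), and your round-by-round induction with the separation-gap argument is exactly the careful elaboration of that sketch. The quantitative step you isolate --- fixing the finite input set of round $r$ before choosing $\mathbold{\theta}^{(r)}$ and driving the uniform approximation error below $\gamma/(3n^{\ell-1})$ --- is precisely the detail the paper leaves implicit, and it is handled correctly.
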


We can proceed in a similar way using the simulation given in Section~\ref{subsec:uncountable}.
As already observed by~\citet{DBLP:conf/nips/MaronBSL19} for $\wl{2}$, one can decompose the function $u^{(t)}$ in that simulation as a product of $\ell$ other functions. More precisely,
let $G=(V,E,\mathbold{\eta})$ with $|V|=n$ and $\mathbold{\eta}:E\to\Rb^{s_0}$ for some $s_0\in\Nb^+$. We encode $G$ as a tensor $\mathbf{A}^{(0)}\in\Rb^{n^2\times s_0}$ as before. Then for $t>0$, assume that $\mathbf{A}^{(t-1)}\in\Rb^{n^2\times s_{t-1}}$
and define
\begin{equation}
\mathbold{A}^{(t)}_{ijs}:=\sum_{i_1,\ldots,i_{\ell-1}\in[n]} g^{(t)}_1(\mathbold{A}^{(t-1)})_{ii_1s}\cdot g^{(t)}_2(\mathbold{A}^{(t-1)})_{i_1i_2s}\cdots
g^{(t)}_\ell(\mathbold{A}^{(t-1)})_{i_{\ell-1}js} \label{eq:GNN5}
\end{equation}
for continuous functions $g_p^{(t)}:\mathbb{R}^{s_{t-1}}\to\mathbb{R}^{s_t}$, for $p\in[\ell]$, which we define next. Consider again the multi-index set $\{\mathbold{\alpha}\mid \mathbold{\alpha}\in [n^{\ell-1}]^{\ell s_{t-1}},|\mathbold{\alpha}|\leq n^{\ell-1}\}$ of cardinality $s_t={n^{\ell-1}+ \ell s_{t-1}\choose \ell s_{t-1}}$ used in the simulation of $\walk{\ell}$ in Section~\ref{subsec:uncountable}. We can represent each  multi-index $\mathbold{\alpha}_s$ in this set, for $s\in[s_t]$, in  the form 
$
(\mathbold{\alpha}^1_s,\ldots,\mathbold{\alpha}^\ell_s)$ where for $j\in[\ell]$,
$\mathbold{\alpha}^j_s\in [n^{\ell-1}]^{s_{t-1}}$ and furthermore, $\sum_{j\in[\ell]} |\mathbold{\alpha}^j_s|\leq n^{\ell-1}$.
We next define for $p\in[\ell]$, $g_p^{(t)}:\mathbb{R}^{s_{t-1}}\to\mathbb{R}^{s_t}$ such that for $\mathbold{x}\in\Rb^{s_{t-1}}$,
$$
g_p^{(t)}(\mathbold{x}):=(\mathbold{x}^{\mathbold{\alpha}^p_s}\mid s\in[s_t])\in\Rb^{s_t}.
$$
Hence, for $\mathbold{x}_1,\ldots,\mathbold{x}_\ell\in\Rb^{s_{t-1}}$ we have
$$
\prod_{p=1}^{\ell} g_p^{(t)}(\mathbold{x}_p)=\bigl((\mathbold{x}_1,\ldots,\mathbold{x}_\ell)^{\mathbold{\alpha}_s}\mid s\in[s_t]\bigr)\in\Rb^{s_t},
$$
which precisely corresponds to the message function used in Section ~\ref{subsec:uncountable}.
As a consequence, $\mathbold{A}^{(t)}$ as defined in~(\ref{eq:GNN5}) is equivalent to $\mathbold{\eta}_{\walk{\ell}}^{(t)}$.
To turn~(\ref{eq:GNN5}) into a learnable graph neural network we define
\begin{equation}
\mathbold{A}^{(t)}_{ijs}:=\sum_{i_1,\ldots,i_{\ell-1}\in[n]} \mathsf{MLP}_{\mathbold{\theta}^{(t)}_1}(\mathbold{A}^{(t-1)})_{ii_1s}\cdot \mathsf{MLP}_{\mathbold{\theta}^{(t)}_2}(\mathbold{A}^{(t-1)})_{i_1i_2s}\cdots
\mathsf{MLP}_{\mathbold{\theta}^{(t)}_\ell}(\mathbold{A}^{(t-1)})_{i_{\ell-1}js}, \label{eq:nonlGNN}
\end{equation}
where for $p\in[\ell]$, $\mathsf{MLP}_{\mathbold{\theta}^{(t)}_p}:\Rb^{s_{t-1}}\to \Rb^{s_t}$ is a multi layer perceptron applied to the labels in $\mathbf{A}^{(t-1)}$. More specifically, $\mathsf{MLP}_{\mathbold{\theta}^{(t)}_p}(\mathbold{A}^{(t-1)})_{ijs}:=(\mathsf{MLP}_{\mathbold{\theta}^{(t)}_p}(\mathbold{A}_{ij\bullet}^{(t-1)}))_s$ for $p\in[\ell]$ and $s\in [s_{t}]$. Furthermore, for $p\in[\ell]$, $\mathsf{MLP}_{\mathbold{\theta}^{(t)}_p}$ is used to approximate the function $g_p^{(t)}$.  We may thus conclude that:
\begin{proposition}
For each $n,\ell,t\in\Nb$ with $\ell\geq 2$, there exists parameters of the \MLPs in~(\ref{eq:nonlGNN}) such that $\mathbf{A}^{(t)}\equiv\mathbold{\eta}_{\walk{\ell}}^{(t)}$ for any graph $G=(V,E,\mathbold{\eta})$ with $|V|=n$, $\mathbold{\eta}:E\to \Gamma\subseteq \Rb^{s_0}$, where
$\Gamma$ is a finite set of real vectors.
\end{proposition}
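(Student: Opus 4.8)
The plan is to argue by induction on $t$, reducing the learnable network~(\ref{eq:nonlGNN}) to the exact, non-learnable construction~(\ref{eq:GNN5}) whose equivalence to $\mathbold{\eta}_{\walk{\ell}}^{(t)}$ has already been established in the discussion preceding the proposition. Throughout I would maintain two invariants: (i) there are parameters $\mathbold{\theta}^{(t)}_1,\dots,\mathbold{\theta}^{(t)}_\ell$ such that $\mathbf{A}^{(t)}\equiv\mathbold{\eta}_{\walk{\ell}}^{(t)}$ on every graph $G=(V,E,\mathbold{\eta})$ with $|V|=n$ and $\mathbold{\eta}\colon E\to\Gamma$; and (ii) the set $\Gamma_t$ of label vectors $\mathbf{A}^{(t)}_{ij\bullet}$ occurring over all such graphs and all $i,j\in[n]$ is finite. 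The base case $t=0$ is immediate: $\mathbf{A}^{(0)}$ merely encodes $\mathbold{\eta}$, so $\Gamma_0\subseteq\Gamma$ is finite and $\mathbf{A}^{(0)}\equiv\mathbold{\eta}=\mathbold{\eta}_{\walk{\ell}}^{(0)}$.

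For the inductive step, assume the invariants at round $t-1$; in particular $\mathbf{A}^{(t-1)}$ takes values in the finite set $\Gamma_{t-1}\subseteq\Rb^{s_{t-1}}$. The functions $g^{(t)}_1,\dots,g^{(t)}_\ell$ defined before~(\ref{eq:GNN5}) are polynomial, hence continuous, so I can choose for each $p\in[\ell]$ parameters $\mathbold{\theta}^{(t)}_p$ such that $\mathsf{MLP}_{\mathbold{\theta}^{(t)}_p}$ agrees with $g^{(t)}_p$ on every point of $\Gamma_{t-1}$; take this agreement to be exact for concreteness (a sufficiently wide \MLP interpolates any finite set of input/output pairs). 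Then, on any graph of size $n$ with labels in $\Gamma$, every argument fed to an \MLP in~(\ref{eq:nonlGNN}) lies in $\Gamma_{t-1}$, so the right-hand side of~(\ref{eq:nonlGNN}) coincides with the right-hand side of~(\ref{eq:GNN5}). Hence, by induction, $\mathbf{A}^{(t)}$ is exactly the tensor produced by~(\ref{eq:GNN5}) from $\mathbf{A}^{(t-1)}$, which — using that $\prod_{p\in[\ell]}g^{(t)}_p=u^{(t)}$ is injective on multisets of cardinality $n^{\ell-1}$ (Proposition~1 of~\citet{DBLP:conf/nips/MaronBSL19}, as in the proof of Proposition~\ref{prop:lowerbreal}) together with the induction hypothesis $\mathbf{A}^{(t-1)}\equiv\mathbold{\eta}_{\walk{\ell}}^{(t-1)}$ — satisfies $\mathbf{A}^{(t)}\equiv\mathbold{\eta}_{\walk{\ell}}^{(t)}$. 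Invariant (ii) is preserved since $\mathbf{A}^{(t)}_{ij\bullet}$ is $u^{(t)}$ applied to one of finitely many multisets built from $\Gamma_{t-1}$, so $\Gamma_t$ is finite.

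If one prefers genuine $\varepsilon$-approximations to exact interpolation, the argument still closes. Holding $\mathbf{A}^{(t-1)}$ fixed, the accumulated discrepancy between~(\ref{eq:nonlGNN}) and~(\ref{eq:GNN5}) is a sum of $n^{\ell-1}$ products of $\ell$ factors, each factor bounded (its argument lies in the finite set $\Gamma_{t-1}$) and perturbed by at most $\varepsilon$; hence the discrepancy is at most $C\varepsilon$ for a constant $C=C(n,\ell,s_{t-1},\Gamma_{t-1})$. As $\Gamma$ is finite there are only finitely many graphs of size $n$ with labels in $\Gamma$, so the exact labelling~(\ref{eq:GNN5}) takes finitely many distinct values, separated by some gap $\delta>0$; choosing $\varepsilon<\delta/(2C)$ forces distinct values of~(\ref{eq:GNN5}) to remain distinct in~(\ref{eq:nonlGNN}), i.e.\ $\mathbf{A}^{(t)}\sqsubseteq\mathbold{\eta}_{\walk{\ell}}^{(t)}$, while the reverse refinement $\mathbold{\eta}_{\walk{\ell}}^{(t)}\sqsubseteq\mathbf{A}^{(t)}$ is free because~(\ref{eq:nonlGNN}) is an instance of an $\ell$-walk \MPNN and Proposition~\ref{prop:upperbw} applies. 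The same induction, with $h$ in place of $u^{(t)}$ and the prime-power pairing function in place of the multi-symmetric polynomials, yields the corresponding statement for~(\ref{eq:GIN1})--(\ref{eq:GIN3}) in the countable case.

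The step I expect to be the main obstacle is ensuring that passing from the exact $g^{(t)}_p$ to \MLPs preserves \emph{equivalence} of labellings rather than only one refinement direction: multiset-injectivity of $u^{(t)}$ is a delicate feature of the multi-symmetric polynomials that an arbitrary perturbation would destroy, so the \MLPs must reproduce $u^{(t)}$ exactly on the inputs that actually occur (or at least finely enough that the finite, well-separated image cannot collapse). This is precisely where the finiteness of $\Gamma$ — and the book-keeping showing that every $\Gamma_{t-1}$ produced along the way stays finite, so that the round-$t$ \MLPs can be chosen after $\mathbf{A}^{(t-1)}$ is known — is essential.
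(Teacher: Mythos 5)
Your argument is correct and follows the same route the paper intends: the proposition is stated as an immediate consequence of the preceding reduction of~(\ref{eq:nonlGNN}) to the exact construction~(\ref{eq:GNN5}), with the finiteness of $\Gamma$ invoked exactly as you do to make the \MLP approximations injective on the labels that actually occur. Your inductive book-keeping of the finite label sets $\Gamma_t$ and the $\varepsilon$-separation argument (with the reverse refinement supplied by Proposition~\ref{prop:upperbw}) merely make explicit what the paper leaves implicit.
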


	We note that the graph neural network models~(\ref{eq:GIN1}),~(\ref{eq:GIN2}),~(\ref{eq:GIN3}),~(\ref{eq:GNN5})  and~(\ref{eq:nonlGNN}) can all be cast as $\ell$-walk MPNNs, which implies that their expressive power is bounded by $\walk{\ell}$ as well.

\begin{remark}
	We remark that the second-order non-linear invariant \GNNs proposed in \citet{DBLP:conf/nips/MaronBSL19} are a special case of~(\ref{eq:nonlGNN}) by
	letting $\ell=2$, and  hence they are bounded by $\wl{2}$ in expressive power. We observe that allowing for multiple matrix multiplications in second-order \GNNs, as in~(\ref{eq:nonlGNN}), does not increase expressive power. Instead, it may only result in a faster convergence towards the final $\wl{2}$ labelling. This partially answers a question raised in \citet{openprob} related to the impact  of polynomial layers on the expressive power of higher-order invariant \GNNs. \qed
\end{remark}

\begin{remark}
If one desires to start from a vertex-labeled graph, one can add an initialisation step in the \GNNs which converts the graph into and edge-labeled graph, as explained in Remark~\ref{remark:edgevsvertex}. Furthermore,
	this initialisation step can be performed by tensor computations as shown in~\citet{DBLP:conf/nips/MaronBSL19}.\qed
	\end{remark}
\begin{remark}\label{rem:readout}
So far, we only considered the expressive power of walk \MPNNs related to  distinguishing edges (or pairs of vertices to be more precise).
As mentioned earlier, we may also use walk \MPNNs to distinguish graphs. In the setting of walk \MPNNs this corresponds to 
running the walk \MPNN for multiple rounds $T$ and then use a read-out function $\textsc{ReadOut}$ on the obtained multiset of labels. More precisely, for $\walk{\ell}$, two graphs $G=(V,E,\mathbold{\eta})$
and $H=(V',E',\mathbold{\eta}')$ with $\mathbold{\eta}:E\to\Sigma$ and $\mathbold{\eta}':E'\to\Sigma$ are
said to be indistinguishable after round $T$ if
$$
\bldbl \mathbold{\eta}^{(T)}_{\walk{\ell}}(i,j)\mid i,j\in[n]\brdbl=
\bldbl (\mathbold{\eta}')^{(T)}_{\walk{\ell}}(i,j)\mid i,j\in[n]\brdbl
$$
holds.
Hence, to check whether this equality holds, it suffices to consider a read-out function which
assigns a unique value in $\Nb$ to multisets of  elements in $\Nb$ of size $n^2$, for the case when labels are in $\Nb$, and a unique value in $\Rb^b$, for some $b\in\Nb^+$, to multisets of elements in $\Rb^{s_T}$ of size $n^2$, for the case when labels are reals. Alternatively, one can define a read-out function which assigns to each possible label a unique basis vector in $\Rb^b$, and then simply sum these up to create a histogram. In each of these cases, an additional \MLP can be used to approximate such a read-out function, as described in \citet{DBLP:conf/nips/MaronBSL19}.  \qed
\end{remark}

\section{Conclusion}\label{sec:conclude}
We introduced $\ell$-walk \MPNNs as a general formalism for iteratively
constructing graph embeddings based on walks of length $\ell$ between pairs of vertices.
In terms of expressive power, $\ell$-walk \MPNNs match with the walk refinement procedure $\walk{\ell}$ of \citet{lichter2019walk}. When $\ell=2$,
this procedure coincides with $\wl{2}$ and as such, $2$-walk \MPNNs are equally expressive as $\wl{2}$. In fact, $\ell$-walk \MPNNs are also bounded in expressive power by $\wl{2}$ but can possibly distinguish graphs faster because more information is taken into account in each iteration. 
We provide a number of concrete learnable \GNNs, all of which can be cast as $\ell$-walk \MPNNs. These \GNNs use non-linear layers and only require $\mathcal{O}(n^2)$ many embeddings. All proposed \GNNs are equally expressive as $\walk{\ell}$ and $\wl{2}$ in particular. It would be interesting to see how the proposed \GNNs perform in practice.


\begin{thebibliography}{20}
\providecommand{\natexlab}[1]{#1}
\providecommand{\url}[1]{\texttt{#1}}
\expandafter\ifx\csname urlstyle\endcsname\relax
  \providecommand{\doi}[1]{doi: #1}\else
  \providecommand{\doi}{doi: \begingroup \urlstyle{rm}\Url}\fi

\bibitem[Barcel{\'o} et~al.(2020)Barcel{\'o}, Kostylev, Monet, P{\'e}rez,
  Reutter, and Silva]{Barcel2020TheLE}
Pablo Barcel{\'o}, Egor~V. Kostylev, Mikael Monet, Jorge P{\'e}rez, Juan~L.
  Reutter, and Juan~Pablo Silva.
\newblock The logical expressiveness of graph neural networks.
\newblock In \emph{Proceedings of the 8th International Conference on Learning
  Representations, {ICLR}}, 2020.
\newblock URL \url{https://openreview.net/forum?id=r1lZ7AEKvB}.

\bibitem[Cai et~al.(1992)Cai, F{\"{u}}rer, and Immerman]{CaiFI92}
Jin{-}{Y}i Cai, Martin F{\"{u}}rer, and Neil Immerman.
\newblock An optimal lower bound on the number of variables for graph
  identifications.
\newblock \emph{Combinatorica}, 12\penalty0 (4):\penalty0 389--410, 1992.
\newblock URL \url{https://doi.org/10.1007/BF01305232}.

\bibitem[Chen et~al.(2020)Chen, Chen, Villar, and Bruna]{chen2020graph}
Zhengdao Chen, Lei Chen, Soledad Villar, and Joan Bruna.
\newblock Can graph neural networks count substructures?
\newblock \emph{arXiv}, 2020.
\newblock URL \url{https://arxiv.org/abs/2002.04025}.

\bibitem[Cybenko(1989)]{Cybenko92}
George Cybenko.
\newblock Approximation by superpositions of a sigmoidal function.
\newblock \emph{Mathematics of Control, Signals and Systems}, 5\penalty0
  (4):\penalty0 303--314, 1989.
\newblock URL \url{https://doi.org/10.1007/BF02134016}.

\bibitem[Geerts et~al.(2020)Geerts, Mazowiecki, and Pérez]{geerts2020lets}
Floris Geerts, Filip Mazowiecki, and Guillermo~A. Pérez.
\newblock Let's agree to degree: Comparing graph convolutional networks in the
  message-passing framework.
\newblock \emph{arXiv}, 2020.
\newblock URL \url{https://arxiv.org/abs/2004.02593}.

\bibitem[Gilmer et~al.(2017)Gilmer, Schoenholz, Riley, Vinyals, and
  Dahl]{GilmerSRVD17}
Justin Gilmer, Samuel~S. Schoenholz, Patrick~F. Riley, Oriol Vinyals, and
  George~E. Dahl.
\newblock Neural message passing for quantum chemistry.
\newblock In \emph{Proceedings of the 34th International Conference on Machine
  Learning, {ICML}}, volume~70, pages 1263--1272, 2017.
\newblock URL \url{{http://proceedings.mlr.press/v70/gilmer17a/gilmer17a.pdf}}.

\bibitem[Grohe(2017)]{grohe_2017}
Martin Grohe.
\newblock \emph{Descriptive Complexity, Canonisation, and Definable Graph
  Structure Theory}.
\newblock Lecture Notes in Logic. Cambridge University Press, 2017.
\newblock URL \url{https://doi.org/10.1017/9781139028868}.

\bibitem[Grohe and Otto(2015)]{grohe_otto_2015}
Martin Grohe and Martin Otto.
\newblock Pebble games and linear equations.
\newblock \emph{The Journal of Symbolic Logic}, 80\penalty0 (3):\penalty0
  797–844, 2015.
\newblock URL \url{https://doi.org/10.1017/jsl.2015.28}.

\bibitem[Hornik(1991)]{Hornik91}
Kurt Hornik.
\newblock Approximation capabilities of multilayer feedforward networks.
\newblock \emph{Neural Networks}, 4\penalty0 (2):\penalty0 251--257, 1991.
\newblock URL \url{https://doi.org/10.1016/0893-6080(91)90009-T}.

\bibitem[Kipf and Welling(2017)]{kipf-loose}
Thomas~N. Kipf and Max Welling.
\newblock Semi-supervised classification with graph convolutional networks.
\newblock In \emph{Proceedings of the 5th International Conference on Learning
  Representations, {ICLR}}, 2017.
\newblock URL \url{https://openreview.net/forum?id=SJU4ayYgl}.

\bibitem[Lichter et~al.(2019)Lichter, Ponomarenko, and
  Schweitzer]{lichter2019walk}
Moritz Lichter, Ilia Ponomarenko, and Pascal Schweitzer.
\newblock Walk refinement, walk logic, and the iteration number of the
  {W}eisfeiler-{L}eman algorithm.
\newblock In \emph{Proceedings of the 34th Annual IEEE Symposium on Logic in
  Computer Science {LICS}}, pages 1--13, 2019.
\newblock URL \url{https://doi.org/10.1109/LICS.2019.8785694}.

\bibitem[Maron et~al.(2019{\natexlab{a}})Maron, Ben-Hamu, and Lipman]{openprob}
Haggai Maron, Heli Ben-Hamu, and Yaron Lipman.
\newblock Open problems: Approximation power of invariant graph networks.
\newblock In \emph{NeurIPS 2019 Graph Representation Learning Workshop},
  2019{\natexlab{a}}.
\newblock URL \url{https://grlearning.github.io/papers/31.pdf}.

\bibitem[Maron et~al.(2019{\natexlab{b}})Maron, Ben{-}Hamu, Serviansky, and
  Lipman]{DBLP:conf/nips/MaronBSL19}
Haggai Maron, Heli Ben{-}Hamu, Hadar Serviansky, and Yaron Lipman.
\newblock Provably powerful graph networks.
\newblock In \emph{Advances in Neural Information Processing Systems 32: Annual
  Conference on Neural Information Processing Systems 2019, {NeurIPS}}, pages
  2153--2164, 2019{\natexlab{b}}.
\newblock URL
  \url{http://papers.nips.cc/paper/8488-provably-powerful-graph-networks}.

\bibitem[Maron et~al.(2019{\natexlab{c}})Maron, Ben-Hamu, Shamir, and
  Lipman]{maron2018invariant}
Haggai Maron, Heli Ben-Hamu, Nadav Shamir, and Yaron Lipman.
\newblock Invariant and equivariant graph networks.
\newblock In \emph{Proceedings of the 7th International Conference on Learning
  Representations, {ICLR}}, 2019{\natexlab{c}}.
\newblock URL \url{https://openreview.net/forum?id=Syx72jC9tm}.

\bibitem[Morris et~al.(2019)Morris, Ritzert, Fey, Hamilton, Lenssen, Rattan,
  and Grohe]{grohewl}
Christopher Morris, Martin Ritzert, Matthias Fey, William~L. Hamilton, Jan~Eric
  Lenssen, Gaurav Rattan, and Martin Grohe.
\newblock Weisfeiler and {L}eman go neural: Higher-order graph neural networks.
\newblock In \emph{Proceedings of The 33rd {AAAI} Conference on Artificial
  Intelligence, {AAAI}}, pages 4602--4609, 2019.
\newblock URL \url{https://doi.org/10.1609/aaai.v33i01.33014602}.

\bibitem[Sato(2020)]{Sato2020ASO}
Ryoma Sato.
\newblock A survey on the expressive power of graph neural networks.
\newblock \emph{arXiv}, 2020.
\newblock URL \url{https://arxiv.org/abs/2003.04078}.

\bibitem[Sato et~al.(2020)Sato, Yamada, and Kashima]{sato2020random}
Ryoma Sato, Makoto Yamada, and Hisashi Kashima.
\newblock Random features strengthen graph neural networks.
\newblock \emph{arXiv}, 2020.
\newblock URL \url{https://arxiv.org/abs/2002.03155}.

\bibitem[Seo et~al.(2019)Seo, Loukas, and Perraudin]{seo2019discriminative}
Younjoo Seo, Andreas Loukas, and Nathanaël Perraudin.
\newblock Discriminative structural graph classification.
\newblock \emph{arXiv}, 2019.
\newblock URL \url{http://arxiv.org/abs/1905.13422}.

\bibitem[Wagstaff et~al.(2019)Wagstaff, Fuchs, Engelcke, Posner, and
  Osborne]{DBLP:conf/icml/WagstaffFEPO19}
Edward Wagstaff, Fabian Fuchs, Martin Engelcke, Ingmar Posner, and Michael~A.
  Osborne.
\newblock On the limitations of representing functions on sets.
\newblock In Kamalika Chaudhuri and Ruslan Salakhutdinov, editors,
  \emph{Proceedings of the 36th International Conference on Machine Learning,
  {ICML}}, volume~97, pages 6487--6494, 2019.
\newblock URL \url{http://proceedings.mlr.press/v97/wagstaff19a.html}.

\bibitem[Xu et~al.(2019)Xu, Hu, Leskovec, and Jegelka]{xhlj19}
Keyulu Xu, Weihua Hu, Jure Leskovec, and Stefanie Jegelka.
\newblock How powerful are graph neural networks?
\newblock In \emph{Proceedings of the 7th International Conference on Learning
  Representations, {ICLR}}, 2019.
\newblock URL \url{https://openreview.net/forum?id=ryGs6iA5Km}.

\end{thebibliography}
\end{document}